\documentclass[10pt,journal,compsoc]{IEEEtran}
\IEEEoverridecommandlockouts
\usepackage{cite}
\usepackage{amsmath,amsthm,amssymb,amsfonts}
\usepackage[linesnumbered,ruled,vlined]{algorithm2e}
\usepackage{threeparttable}
\usepackage{tabularx}
\usepackage{graphicx}
\usepackage{subcaption}
\usepackage{textcomp}
\usepackage[usenames,dvipsnames,table,xcdraw]{xcolor}
\usepackage{tkz-graph}
\usepackage{makecell}
\usepackage{bbm}
\usepackage{todonotes}
\usepackage[hidelinks]{hyperref}
\usepackage{stfloats}
\usepackage{float}
\usepackage{xspace}
\usepackage{comment}
\usepackage[normalem]{ulem}

\usetikzlibrary{arrows,decorations,backgrounds,shapes,patterns,shadows,positioning,decorations.pathmorphing,matrix,calc,automata}

\def\BibTeX{{\rm B\kern-.05em{\sc i\kern-.025em b}\kern-.08em
    T\kern-.1667em\lower.7ex\hbox{E}\kern-.125emX}}
\DeclareMathOperator*{\minimize}{minimize}

\SetKw{Or}{\textbf or}
\SetKw{And}{\textbf and}

\newtheorem{theorem}{Theorem}[section]
\newtheorem{corollary}{Corollary}[theorem]
\newtheorem{lemma}[theorem]{Lemma}
\newtheorem{example}{Example}
\theoremstyle{definition}
\newtheorem{definition}{Definition}[section]

\SetCommentSty{mycommfont}

\newcommand{\mytabcolsepsmall}{3pt}

\newcommand{\cwcet}{Blue}
\newcommand{\ceft}{Green}
\newcommand{\clst}{Red}
\newcommand{\clw}{Cyan}
\newcommand{\ciw}{Orange}
\newcommand{\cow}{Purple}

\newcommand{\matr}[1]{\mathbf{#1}}
\newcommand{\mathset}[1]{\mathcal{#1}}

\newif\iffinal
\finaltrue 

\iffinal
    \newcommand{\remove}[1]{}
    \newcommand{\removeeq}[2]{}
    \newcommand{\revise}[1]{#1}
    \newcommand{\secondr}[1]{#1}
\else
    \newcommand{\remove}[1]{}
    \newcommand{\removeeq}[2]{}
    \newcommand{\revise}[1]{#1}
    \newcommand{\secondr}[1]{{\color{blue}#1}}
\fi

\definecolorseries{test}{rgb}{grad}[rgb]{.95,.55,.55}{11,11,17}
\resetcolorseries[10]{test}
\newcommand{\addtodoeditor}[1]{%
    \colorlet{#1}{test!!+!50}
    \expandafter\newcommand\csname#1\endcsname [1]{%
        \todo[color=#1,size=\tiny]{\sffamily\textbf{\uppercase{#1}:}
    ##1}\xspace%
    }
    \expandafter\newcommand\csname#1i\endcsname [1]{%
        \todo[inline, color=#1]{\sffamily\textbf{\uppercase{#1}:} ##1}\xspace%
    }
}


\addtodoeditor{bs}
\addtodoeditor{mt}
\addtodoeditor{ab}
\addtodoeditor{dr}
\addtodoeditor{db}
\addtodoeditor{zq}

\newcommand{\eg}{{\it e.g.}\xspace}
\newcommand{\ie}{{\it i.e.}\xspace}

\let\oldnl\nl
\newcommand{\nonl}{\renewcommand{\nl}{\let\nl\oldnl}}

\begin{document}

\title{Edge Generation Scheduling for DAG Tasks\\ Using Deep Reinforcement Learning}

\author{
Binqi~Sun,
Mirco~Theile,
Ziyuan~Qin,
Daniele~Bernardini,
Debayan~Roy,
Andrea~Bastoni,
and~Marco~Caccamo%
\IEEEcompsocitemizethanks{
\IEEEcompsocthanksitem Binqi~Sun, Mirco~Theile, Ziyuan~Qin, Daniele~Bernardini, Andrea~Bastoni, and~Marco~Caccamo are with TUM School of Engineering and Design, Technical University of Munich, 85748 Munich, Germany. E-mail: \{binqi.sun, mirco.theile, ziyuan.qin, daniele.bernardini, andrea.bastoni, mcaccamo\}@tum.de.
\IEEEcompsocthanksitem Mirco~Theile is also with the Department of Electrical Engineering and Computer Sciences, University of California Berkeley, Berkeley, CA 94720, USA.
\IEEEcompsocthanksitem Debayan~Roy was formerly with TUM School of Engineering and Design, Technical University of Munich, 85748 Munich, Germany. E-mail: debayan.roy.tum@gmail.com
}%
\thanks{Marco Caccamo was supported by an Alexander von Humboldt Professorship endowed by the German Federal Ministry of Education and Research.}%
}

\IEEEtitleabstractindextext{%
\begin{abstract}
Directed acyclic graph (DAG) tasks are currently adopted in the real-time domain to model complex applications from the automotive, avionics, and industrial domains that implement their functionalities through chains of intercommunicating tasks.
This paper studies the problem of scheduling real-time DAG tasks by
presenting a novel schedulability test based on the concept of \emph{trivial schedulability}.
Using this schedulability test, we propose a new DAG scheduling framework (\emph{edge generation scheduling---EGS}) that attempts to minimize the DAG width by iteratively generating edges while guaranteeing the deadline constraint.
We study how to efficiently solve the problem of generating edges by developing a deep reinforcement learning algorithm combined with a graph representation neural network to learn an efficient edge generation policy for EGS.
We evaluate the effectiveness of the proposed algorithm by comparing it with state-of-the-art DAG scheduling heuristics and an optimal mixed-integer linear programming baseline. Experimental results show that the proposed algorithm outperforms the state-of-the-art by requiring fewer processors to schedule the same DAG tasks.
The code is available at \url{https://github.com/binqi-sun/egs}.
\end{abstract}

\begin{IEEEkeywords}
DAG scheduling, real-time, edge generation, deep reinforcement learning
\end{IEEEkeywords}}

\IEEEoverridecommandlockouts
\IEEEpubid{\makebox{© 2024 IEEE. Personal use is permitted, but republication/redistribution requires IEEE permission. Citation information: DOI 10.1109/TC.2024.3350243 \hfill}}

\maketitle

\IEEEpubidadjcol

\IEEEraisesectionheading{\section{Introduction}
\label{sec:introduction}}

Current real-time applications in the automotive, avionics, and industrial domains realize their functionalities through complex chains of intercommunicating tasks. For example, \cite{andreozzi_et_al:LIPIcs.ECRTS.2022.1, waters-challenge-2019}
present recent driving assistance and autonomous driving applications where data is processed through multiple periodically-activated steps, from sensor data acquisition (\eg, Lidar and cameras) to actuators (\eg, brakes and steering wheel).
Such applications---including their execution and precedence-constraint requirements---are modeled using directed acyclic graph (DAG) tasks with different periods that can be reduced to a \emph{single DAG} using techniques such as~\cite{verucchi2020latency}.

Reasoning on real-time properties of DAG tasks has proved challenging. Testing the schedulability of DAG tasks is NP-hard in the strong sense~\cite{ullman-np-complete-sched-prob}, and many works have focused on devising heuristics for sufficient schedulability tests (see Section~\ref{sec:related_work} for relevant related works and refer to~\secondr{\cite{micaela-thesis,Li2022,verucchi2023survey}} for a comprehensive survey), while exact results \secondr{(\eg,~\cite{baruah-sched-dag-assignment,chang-exact-dag-wcrt,ahmed2022exact})} can only be obtained for simple DAG tasks.

This work focuses on the setup where a single periodic non-preemptive DAG task is executed on a multicore platform with identical processors. Drawing from graph theory, we develop a novel schedulability test based on the key observation that a DAG whose \emph{width} is not greater than the number of available processors and whose \emph{length} is less than or equal to the deadline of the DAG is schedulable.
We classify such a DAG as a \emph{trivially schedulable} DAG and show that any DAG is schedulable if and only if it can be converted into a trivially schedulable DAG by adding edges. In addition, we show that a trivially schedulable DAG task can be dispatched via \emph{global} and \emph{partitioned} strategies. While global dispatching strategies usually require prioritized queues for ready jobs, we show that prioritization is not needed when dispatching a trivially schedulable DAG task because a ready job is guaranteed to have an idle processor available for execution. For partitioned dispatching strategies, the paths covering a DAG can simply be assigned to processors in the order of the precedence constraints.

To test whether a DAG task is schedulable, we then focus on the problem of adding appropriate edges to convert it into a trivially schedulable DAG task without violating its original constraints. To this end, we propose the \emph{Edge Generation Scheduling} (EGS) framework that attempts to make a DAG task trivially schedulable by iteratively adding appropriately chosen edges. If EGS succeeds in reducing the width to the number of processors while maintaining the length less than or equal to the deadline, the original DAG task is guaranteed to be schedulable. 

The EGS framework shifts the complexity of solving the DAG scheduling problem to the problem of selecting \emph{the best} edges to add to a DAG to make it trivially schedulable. We exploit topological and temporal graph properties to limit the search space for the edges to add and propose a deep reinforcement learning (DRL) approach to learn an edge generation policy. In particular, we use the DRL algorithm Proximal Policy Optimization (PPO) \cite{schulman2017proximal} and the graph representation neural network architecture Graphormer~\cite{ying2021transformers} that is well suited for solving this class of problems. 

Combining the proposed EGS framework and the edge generation policy learned by the developed DRL, we derive a concrete DAG scheduling algorithm called EGS-PPO and evaluate it against state-of-the-art DAG scheduling heuristics. Our results show that EGS-PPO consistently outperforms the other approaches by requiring fewer processors to schedule the same DAGs. Additionally, EGS with a random edge generation policy can achieve results similar to the state-of-the-art, highlighting the significance of the EGS framework.
We also compare against an optimal mixed-integer linear programming (MILP) baseline for small DAG tasks. EGS-PPO outperforms the other approaches, achieving three to five times smaller optimality gaps.

In summary, in this paper, we:  
\begin{enumerate}
    \item Present a new schedulability test (trivial schedulability) for DAG tasks based on observations from the graph domain;
    \item Propose a novel DAG scheduling framework (EGS) that minimizes processor usage by iteratively generating edges; 
    \item Formulate the edge generation problem as a Markov decision process (MDP) and develop a deep reinforcement learning (DRL) agent to learn an effective edge generation policy for EGS;
    \item Evaluate the effectiveness of the proposed EGS framework and DRL algorithm by comparing with exact solutions and state-of-the-art DAG scheduling algorithms through extensive experiments on synthetic DAG tasks.
\end{enumerate}

The remainder of the paper is organized as follows. Section~\ref{sec:related_work} reviews the literature on DAG scheduling, and Section~\ref{sec:preliminaries} describes the system model and introduces the employed concepts of graph theory. Section~\ref{sec:sched_test} and Section~\ref{sec:egs} present our schedulability test and the EGS scheduling framework. A DRL algorithm is developed in Section~\ref{sec:drl} to learn an efficient edge generation policy for EGS. Section~\ref{sec:evaluation} discusses our experimental evaluation, and Section~\ref{sec:conclusion} presents future research directions and conclusions.

\section{Related Work}
\label{sec:related_work}

\subsection{Real-time DAG scheduling}
The periodic computation in many cyber-physical systems~(CPS) domains, such as automotive, avionics, and manufacturing, is often modeled as a DAG task~\cite{waters-challenge-2019,Minaeva2021}. Many applications in these domains are part of safety-critical control loops (\eg, brake, speed, and steering control) and hence, have stringent timing requirements (\ie, they are required to meet their deadlines~\cite{AUTOSARTiming,ARINC653}).
\secondr{These requirements led to a body of work performing timing and scheduling analyses of a variety of DAG-based software models,
ranging from a single DAG modeling one task~\cite{Baruah12,Graham69,he2019intra,zhao2020dag,he2021response,he2022bounding}, DAGs for multiple tasks with different periods~\cite{Bonifaci13,Baruah14,li2014federated,Pathan18,yadlapalli2021lag,zhao2022dag}, and more recently, to conditional DAGs~\cite{Baruah15,melani2015response,ueter2021response}, heterogeneous DAGs~\cite{Yang16,Chang20,Zahaf21,reghenzani2021multi}, and DAGs with mutually exclusive vertices~\cite{bi2022response}}. 
\revise{Given the techniques proposed in~\cite{verucchi2020latency} to reduce a DAG task set to a single DAG,} this work focuses on single DAGs for establishing a new scheduling strategy. \revise{We note that our method can be trivially extended to the scheduling of multiple DAGs with federated scheduling architectures following the approach in ~\cite{li2014federated}.} 
\secondr{For brevity, the following focuses on real-time DAG scheduling closely related to this work. We refer the readers to~\cite{micaela-thesis,Li2022,verucchi2023survey} for a comprehensive survey.}

The real-time DAG scheduling literature has mainly performed analysis to (i)~derive schedulability tests, (ii)~bound the response times, and (iii)~put forward scheduling strategies to improve schedulability. 
Baruah et al.~\cite{Baruah12} first proposed a schedulability test for a single DAG task with constrained deadlines and the global earliest deadline first (EDF) scheduling policy. The test is mainly based on the task's deadline and period, the length of the longest task chain, and the volume (\ie, the sum of the WCETs) of the DAG. Later, Bonifaci et al.~\cite{Bonifaci13} extended the test to a global deadline monotonic (DM) scheduling policy, arbitrary deadlines, and a set of DAG tasks. 
Furthermore, Baruah et al.~\cite{Baruah14} improved the schedulability test for constrained deadlines by exploiting the concept of work functions.

One of the earliest works in the DAG response time analysis provided a bound---popularly known as Graham's bound---for the response time of a task based on the longest path in and the volume of the DAG~\cite{Graham69}, \revise{which is valid for any work-conserving scheduling policy on homogeneous multicore platforms. Recently, He et al.~\cite{he2022bounding} demonstrated the pessimism in Graham's bound and proposed a tighter bound considering multiple long paths instead of the longest one.} Melani et al.~\cite{melani2015response} also extended Graham's bound to systems with multiple DAG tasks by considering inter-task interference. Global earliest-deadline-first (EDF) and fixed-priority (FP) scheduling policies were studied in that work. Further, in~\cite{Pathan18}, the bounds were made tighter for two-level FP scheduling, where a DM scheme was followed at the task level, while subtasks were assigned priorities based on the topological order. In recent years, He et al.~\cite{he2019intra} proposed prioritizing subtasks in the longest paths to reduce the response time and improve schedulability, while at the task level, they still applied DM. Later, Zhao et al.~\cite{zhao2020dag} improved the priority assignment strategy at the subtask level by considering dependencies between subtasks and parallelization opportunities.
\revise{Different from the above approaches, He et al.~\cite{he2021response} relaxed the constraint that priority assignment must comply with the topological order of the DAG and proposed a new priority assignment policy, leading to smaller response time bounds.}

Note that some of the works above performed analyses for given scheduling policies~\cite{Baruah12,Bonifaci13,Baruah14,Graham69,melani2015response,he2022bounding}, while others propose techniques to determine schedule configuration (\eg, priorities) to improve the schedulability~\cite{he2019intra,zhao2020dag,he2021response}. Our work follows the latter direction, \ie, we determine a static ordering of sub-tasks that will make the task schedulable. 
In our experiments (Section~\ref{sec:evaluation}), we show that our proposed approach outperforms the most recent works~\cite{he2019intra,zhao2020dag,he2021response} in generating feasible schedules for DAG tasks.

\subsection{DRL for DAG scheduling}

Deep reinforcement learning (DRL) has been applied to various combinatorial optimization problems, including scheduling tasks. In recent years, several studies have applied DRL to DAG scheduling. 
\secondr{Mao et al.~\cite{mao2019learning} proposed a DRL-based DAG scheduler for scheduling data processing jobs in the Spark cluster. Their model takes the cluster's state information as input and learns to select the next DAG node to be executed via a graph convolution network (GCN) and a policy gradient method.
Sun et al.~\cite{sun2021deepweave} proposed a DRL approach to solve a coflow scheduling problem in distributed computing. It also uses a graph neural network (GNN) in combination with a policy gradient method. However, different from our work, it learns to schedule the edges of a DAG job representing communication stages (\ie, coflows) instead of the DAG nodes representing computation stages.

More recently, Lee et al.~\cite{Lee2021Global} proposed a DRL-based DAG task scheduler, which employs a GCN to process a complex interdependent task structure and minimize the makespan of a DAG task. The scheduler assigns priorities to each sub-task to be used in list scheduling. 
Similar to~\cite{Lee2021Global}, Joen et al.~\cite{jeon2023neural}
developed a learning-based scheduler to assign priorities for list scheduling. The difference is that they proposed a one-shot neural network encoder to sample priorities instead of using an episodic reinforcement learning approach.
In contrast to these works, our proposed method remains in the graph domain and adds edges to make the DAG task trivially schedulable. 
Since the code or implementation details of \cite{Lee2021Global} and \cite{jeon2023neural} have not been released, we cannot compare our method with
theirs in the experimental evaluation.}

\section{System Model and Preliminaries}
\label{sec:preliminaries}

\subsection{Task model}

We consider a 
DAG task running on $M$ identical processors. The DAG task is characterized by $(\mathcal{G},D\leq T)$, where $\mathcal{G}$ is a graph defining the set of sub-tasks,
$T$ denotes the task period defined as the inter-arrival time of two consecutive jobs (\textit{i.e.}, task instances),
and $D$ denotes the task deadline by which all the active sub-jobs must finish their execution.
Without loss of generality, we consider constrained deadline, which means the deadline is smaller or equal to the task period (\textit{i.e.}, $D \leq T$).
The task graph $\mathcal{G}$ is defined by $(\mathset{V},\mathset{E})$, where $\mathset{V} = (v_i)$ is a set of $n$ nodes representing $n$ sub-tasks, and $\mathset{E} = (e_{ij})$ is a set of directed edges representing the precedence constraints between the sub-tasks. Each sub-task $v_i$ is a non-preemptive sequential computing workload, and its worst-case execution time (WCET) is denoted as $C_i$. For any two nodes $v_i$ and $v_j$ connected by a directed edge $e_{ij}$, $v_j$ can start execution only if $v_i$ has finished its execution. Node $v_i$ is called a \textit{predecessor} of $v_j$, and $v_j$ is a \textit{successor} of $v_i$. The predecessors and successors of a node $v_i$ are formally defined as $pre(v_i) = \{v_j \in \mathset{V} | e_{ji} \in \mathset{E}\}$ and $suc(v_i) = \{v_j \in \mathset{V} | e_{ij} \in \mathset{E}\}$, respectively. Moreover, the nodes that are either directly or transitively predecessors (\textit{resp.}, successors) of node $v_i$ are defined as the \textit{ancestors} (\textit{resp.}, \textit{descendants}) of node $v_i$, denoted by $anc(v_i)$ (\textit{resp.}, $des(v_i)$). Furthermore, a node with no ancestor (\textit{resp.}, descendant) is referred to as the \textit{source} (\textit{resp.}, \textit{sink}) node of the DAG. Without loss of generality, we assume only one source node and one sink node exist in a DAG. A DAG with multiple source (sink) nodes can be easily supported by adding dummy nodes with zero WCET.

\begin{example}
\label{eg:dag}
Consider a DAG task $(\mathcal{G}, D)$ consisting of 7 nodes and 8 edges. The DAG $\mathcal{G}$ is shown in Fig. \ref{fig:dag_example}. The number below each node denotes its WCET. The task deadline is set as $D=8$. Take node $v_6$ as an example, the predecessors and successors of node $v_6$ are $pre(v_6) = \{v_2, v_3, v_4\}$ and $suc(v_6) = \{v_7\}$; the ancestors and descendants of node $v_6$ are $anc(v_6) = \{v_1, v_2, v_3, v_4\}$ and $des(v_6) = \{v_7\}$, respectively.
\end{example}


\begin{figure}[ht]
\centering
\begin{tikzpicture}[scale=0.75]
    [
        ->,>=stealth',shorten >=1pt,auto,node distance=2.8cm,semithick,node font=\footnotesize
    ]
    \node (1) [circle, draw, label=below:$v_1$]  at (0.0,1.5) {0};
    \node (2) [circle, draw, label=below:$v_2$]  at (2.0,3.0) {5};
    \node (3) [circle, draw, label=below:$v_3$]  at (2.0,1.5) {4};
    \node (4) [circle, draw, label=below:$v_4$]  at (2.0,0.0) {3};
    \node (5) [circle, draw, label=below:$v_5$]  at (4.0,2.25) {3};
    \node (6) [circle, draw, label=below:$v_6$]  at (4.0,0.75) {1};
    \node (7) [circle, draw, label=below:$v_7$]  at (6.0,1.5) {0};
    \path [->] (1) edge (2);
    \path [->] (1) edge (3);
    \path [->] (1) edge (4);
    \path [->] (2) edge (5);
    \path [->] (2) edge (6);
    \path [->] (3) edge (6);
    \path [->] (4) edge (6);
    \path [->] (5) edge (7);
    \path [->] (6) edge (7);

    \node[draw] at (7.0, 3.0) {$D=8$};
    
\end{tikzpicture}
\caption{Example of a DAG task.} \label{fig:dag_example}
\end{figure}
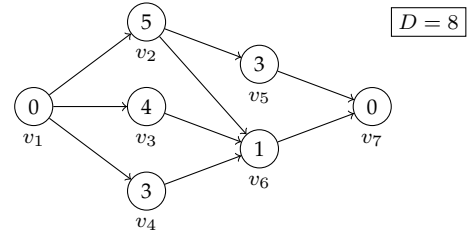

\subsection{Scheduling model}

A DAG task is considered \textit{schedulable} if all the sub-tasks can finish their execution no later than the deadline.
At runtime, a sub-job is \emph{ready} once the job is released and its predecessors have finished their execution.
We consider two different strategies for dispatching ready sub-jobs to processors: (i) a \emph{global} strategy, where sub-jobs are dynamically dispatched on the available processors and (ii) a \emph{partitioned} strategy, where the assignment of nodes to processors is predetermined offline.

The global dispatching strategy can be implemented by maintaining a prioritized queue to store the ready sub-jobs waiting for execution. Once a sub-job is ready, it goes into the waiting queue, and when a processor becomes idle, the highest priority sub-job in the queue is assigned to the processor for execution. 
Note that since the node-to-processor mapping is not fixed in the global dispatching strategy, different sub-jobs of the same node can execute on different processors. A partitioned strategy can be implemented by maintaining a list for each processor to store the nodes to be executed and their relative execution order.

\subsection{Boolean algebra in graph theory} 
\label{sec:algebra}
In graph theory, Boolean matrices are widely-used to represent graph structures. Thus, Boolean algebra can be applied. Here, we introduce some basic Boolean matrix operators and show how they are used in graph operations.

\revise{\subsubsection{Adjacency matrix and transitive closure}}
An \textit{adjacency matrix} $\matr{A} \in \mathbb{B}^{n \times n}$ is a binary square matrix used to represent the connectivity relations in a graph $\mathcal{G}=(\mathcal{V}, \mathcal{E})$, where $[\matr{A}]_{ij} = 1$ if and only if there exists an edge $e_{ij} \in \mathcal{E}$ between node $v_i$ and node $v_j$.
The \textit{transitive closure} of a DAG is defined to represent the reachability relation between the nodes. It can be represented as a binary matrix $\matr{T_c} \in \mathbb{B}^{n \times n}$, where $[\matr{T_c}]_{ij}=1$ if and only if node $v_i$ is an ancestor of node $v_j$. 

The adjacency matrix of the transitive closure can be calculated based on the adjacency matrix of the original graph by the Floyd-Warshall algorithm \cite{floyd1962algorithm}, which requires a time complexity of $\mathcal{O}(n^3)$. 

\revise{\subsubsection{Matrix multiplication}}
We introduce two matrix multiplication methods in Boolean algebra: \textit{Boolean matrix multiplication} and \textit{max-plus matrix multiplication}. \revise{The matrix multiplication is used to determine ancestors and dependents of nodes, while the max-plus matrix multiplication is used to compute execution time bounds.}

Given two Boolean matrices $\mathbf{A}, \mathbf{B} \in \mathbb{B}^{n \times n}$, the Boolean matrix multiplication $\mathbb{B}^{n\times n}\times \mathbb{B}^{n\times n}\mapsto \mathbb{B}^{n\times n}$ is defined as:
\begin{equation} \label{eq:bmm}
    [\mathbf{AB}]_{ij} = \bigvee_{k=1}^{n}{([\mathbf{A}]_{ik} \land [\mathbf{B}]_{kj})},\quad \forall i,j = 1,...,n
\end{equation}
where $[\mathbf{AB}]_{ij}$ denotes the $(i,j)$-th element in the Boolean matrix multiplication product $\mathbf{A} \mathbf{B}$; $\lor$ and $\land$ denote the \textit{or} and \textit{and} operators, respectively.

Given a Boolean matrix $\mathbf{A} \in \mathbb{B}^{n \times n}$ and a real vector $\mathbf{b} \in \mathbb{R}^{n}$, the max-plus matrix multiplication $\mathbb{B}^{n\times n}\times \mathbb{R}^{n}\mapsto \mathbb{R}^{n}$ is defined as:
\begin{equation} \label{eq:mpmm}
    [\mathbf{A} \otimes \mathbf{b}]_i = \max_{k=1,...,n}{([\mathbf{A}]_{ik} \cdot [\mathbf{b}]_k)},\quad \forall i = 1,...,n
\end{equation}
where $[\mathbf{A} \otimes \mathbf{b}]_i$ is the $i$-th element in the max-plus matrix multiplication product $\mathbf{A} \otimes \mathbf{b}$.

\subsection{Length of a DAG task}

\subsubsection{Path and DAG length}
\revise{The length of a DAG is a lower bound on the total execution time of the DAG task. It is later used as a core element of the EGS scheduler. Formally, it is defined through the length of the critical path.}
A \textit{path} $p=\{v_{p_1},...,v_{p_m}\}$ is a sequence of nodes that are connected by a sequence of edges in the same direction (\textit{i.e.}, $e_{p_kp_{k+1}} \in \mathset{E}, \forall k=1,2,...,m-1$). The length of path $p$ is defined as the sum WCET of the nodes included in the path: $\mathrm{L}(p) = \sum_{k=1}^{m}{C_{p_k}}$. A \textit{complete path} of a DAG is a path that includes the source node and sink node of the DAG. The longest complete path is defined as the \textit{critical path} $p^*$, and the length of the critical path is defined as the length of the DAG. 
More formally, we have:

\begin{definition}[DAG length]\label{def:length}
    The length of a DAG task $(\mathcal{G}, D)$ equals the length of the longest path in the DAG. $\mathrm{L}(\mathcal{G}) = \max_{p \in P}{\mathrm{L}(p)}$, where $P$ is the set of paths in $\mathcal{G}$.
\end{definition}

The length of a DAG can be computed within time complexity $\mathcal{O}(n^2)$.

\subsubsection{Node-level timing attributes}
For each node, we define four timing attributes related to the DAG length: earliest starting time (EST), earliest finishing time (EFT), latest starting time (LST), and latest finishing time (LFT). The EST means the earliest time a node can start its execution, which equals the maximum of its predecessors' EFT. Similarly, the LFT represents the latest time a node can finish its execution while meeting the deadline, \ie, the minimum of its successors' LST\revise{. These timing attributes are used to restrict the action space of the reinforcement learning agent, and are part of the node features to aid the agent's learning. They are defined through}:
\begin{equation} \label{eq:node_level_timing}
\begin{aligned}
    \mathbf{t}^\text{EFT} &= \mathbf{t}^\text{EST} + \mathbf{C}\\
    \mathbf{t}^\text{EST} &= \matr{A} \otimes \mathbf{t}^\text{EFT}\\
    \mathbf{t}^\text{LST} &= \mathbf{t}^\text{LFT} - \mathbf{C}\\
    \mathbf{t}^\text{LFT} &= \matr{A}^T \otimes \mathbf{t}^\text{LST}
\end{aligned}
\end{equation}
where $\mathbf{t}^\text{EST}$, $\mathbf{t}^\text{EFT}$, $\mathbf{t}^\text{LST}$, $\mathbf{t}^\text{LFT} \in \mathbb{R}^n$ are vectors denoting the EST, EFT, LST, and LFT of the nodes, respectively; $\mathbf{C} \in \mathbb{R}^n$ is a vector denoting the WCET of each node.

Equations (\ref{eq:node_level_timing}) can be solved by fixed-point iteration. First, we initialize $\revise{t}^\text{EST}_i = 0,\ \revise{t}^\text{LFT}_i = D,\ \forall i=1,...,n$. Then, at each iteration $k=1,...,n$, we update the values of each timing attribute according to \eqref{eq:node_level_timing} until they converge (\textit{i.e.}, no value is updated from iteration $k$ to iteration $k+1$). It can be guaranteed that the values will converge within $n$ iterations since the critical path of the DAG is composed of at most $n$ nodes \cite{verucchi2020latency}. Thus, the time complexity of computing the node-level timing attributes is $\mathcal{O}(n^3)$.

\begin{example}
The critical path of the DAG in Example \ref{eg:dag} is $\{v_1, v_2, v_5, v_7\}$, and the length of the DAG is $C_1 + C_2 + C_5 + C_7 = 8$. 
The EFT and LST of each node are shown in Fig.~\ref{fig:eg_timing}.

\begin{figure}[ht]
\centering
\begin{tikzpicture}[scale=0.85]
    [
        ->,>=stealth',shorten >=1pt,auto,node distance=2.8cm,semithick,node font=\footnotesize
    ]
    \tikzset
    {
        in place/.style=
        {
          auto=false,
          fill=white,
          inner sep=2pt,
        },
    }
    \node (1) [circle, draw, align=center, label=below:$v_1$, inner sep=1pt, red] at (0.0,1.5) 
    {
        \textcolor{\cwcet}{$0$} \\ $\textcolor{\clst}{0}$ $\textcolor{\ceft}{0}$
    };
    \node (2) [circle, draw, align=center, label=below:$v_2$, inner sep=1pt, red] at (2.0,3.0) 
    {
        \textcolor{\cwcet}{$5$} \\ $\textcolor{\clst}{0}$ $\textcolor{\ceft}{5}$
    };
    \node (3) [circle, draw, align=center, label=below:$v_3$, inner sep=1pt]  at (2.0,1.5)
    {
        \textcolor{\cwcet}{$4$} \\ $\textcolor{\clst}{3}$ $\textcolor{\ceft}{4}$
    };
    \node (4) [circle, draw, align=center, label=below:$v_4$, inner sep=1pt]  at (2.0,0.0)
    {
        \textcolor{\cwcet}{$3$} \\ $\textcolor{\clst}{4}$ $\textcolor{\ceft}{3}$
    };
    \node (5) [circle, draw, align=center, label=below:$v_5$, inner sep=1pt, red]  at (4.0,2.25)
    {
        \textcolor{\cwcet}{$3$} \\ $\textcolor{\clst}{5}$ $\textcolor{\ceft}{8}$
    };
    \node (6) [circle, draw, align=center, label=below:$v_6$, inner sep=1pt]  at (4.0,0.75)
    {
        \textcolor{\cwcet}{$1$} \\ $\textcolor{\clst}{7}$ $\textcolor{\ceft}{6}$
    };
    \node (7) [circle, draw, align=center, label=below:$v_7$, inner sep=1pt, red]  at (6.0,1.5)
    {
        \textcolor{\cwcet}{$0$} \\ $\textcolor{\clst}{8}$ $\textcolor{\ceft}{8}$
    };

    \node (legend) [circle, draw, align=center, label=below:, inner sep=1pt, font=\tiny]  at (7.0,0.0)
    {
        \textcolor{\cwcet}{WCET} \\ \textcolor{\clst}{LST} \textcolor{\ceft}{EFT}
    };
    
    \path [->, red] (1) edge (2);
    \path [->] (1) edge (3);
    \path [->] (1) edge (4);
    \path [->, red] (2) edge (5);
    \path [->] (2) edge (6);
    \path [->] (3) edge (6);
    \path [->] (4) edge (6);
    \path [->, red] (5) edge (7);
    \path [->] (6) edge (7);

    \node[draw] at (7.0, 3.0) {$D=8$};

\end{tikzpicture}
\caption{Example of DAG length and node-level timing attributes. The nodes and edges of the critical path are marked in red. The numbers inside each node represent the node's WCET, LST, and EFT with corresponding colors.}
\label{fig:eg_timing}
\end{figure}
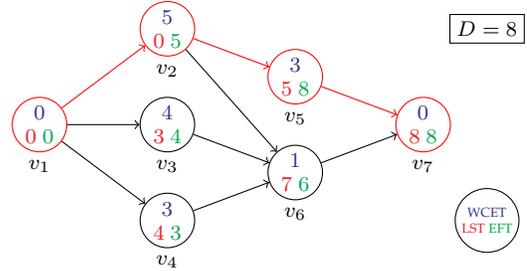%
\end{example}

\subsection{Width of a DAG task} 

\subsubsection{Antichain and DAG width}
\revise{The width of a DAG task indicates the maximum number of nodes that can be run in parallel. It can be computed using the critical antichain of the DAG.}
An antichain $q=\{v_{q_1}, ..., v_{q_m}\}$ in a DAG is a set of nodes that are pair-wise non-reachable (\textit{i.e.}, $v_i \notin anc(v_j), \forall i,j \in (q_1,...,q_m)$). The size of an antichain is defined as the number of nodes in the antichain. The maximum-size antichain is called the \textit{critical antichain}, and the size of the critical antichain is defined as the width of the DAG. By Dilworth’s theorem \cite{dilworth1950decomposition}, the width of $\mathcal{G}$ also equals the minimum number of paths needed to cover all the nodes of DAG $\mathcal{G}$.

\begin{definition}[DAG width]\label{def:width}
    The width of a DAG task $(\mathcal{G},D)$ is defined as the size of the maximum-size antichain in DAG $\mathcal{G}$.
    It is equivalent to the minimum number of paths needed to cover all the nodes in DAG $\mathcal{G}$.
\end{definition}

There have been several methods proposed for DAG width computation in the literature. The most well-known ones are minimum path cover algorithms, where the problem is reduced to either the \textit{maximum matching} in bipartite graphs with time complexity $\mathcal{O}(\sqrt{n} m^*)$ \cite{hopcroft1973n} or the \textit{minimum flows} in directed graphs with time complexity $\mathcal{O}(n m)$ \cite{bang2008digraphs}, where $m$ and $m^*$ denote the number of edges in graph $\mathcal{G}$ and the transitive closure of graph $\mathcal{G}$, respectively. 

\subsubsection{Node-level parallelism attributes}
For each node $v_i \in V$, we define three attributes related to the DAG width: lateral width (LW), in-width (IW), and out-width (OW). The LW of node $v_i$ means the maximum number of pair-wise non-reachable nodes with which node $v_i$ can run in parallel. It equals the width of the DAG derived by removing node $v_i$ and all its ancestors and descendants from DAG $\mathcal{G}$. The IW (\textit{resp.}, OW) of node $v_i$ denotes the maximum number of pair-wise non-reachable nodes among the ancestors (\textit{resp.}, descendants) of node $v_i$ and the nodes that are parallel to $v_i$. \revise{As with the node-level timing attributes, the parallelism attributes are used to restrict action space and are part of the node features intended to improve the agent's understanding of the DAG.} Similar to the LW, the IW (\textit{resp.}, OW) of node $v_i$ can be calculated as the width of the DAG after removing node $v_i$ and all its descendants (\textit{resp.}, ancestors) from DAG $\mathcal{G}$:
\begin{equation}
    \begin{aligned}
        &m^\text{LW}_i = \mathrm{W}(\mathcal{G} \setminus \mathset{V}'), \mathset{V}' = anc(v_i) \cup des(v_i) \cup \{v_i\}\\
        &m^\text{IW}_i = \mathrm{W}(\mathcal{G} \setminus \mathset{V}'),\ \mathset{V}' = des(v_i) \cup \{v_i\}\\
        &m^\text{OW}_i = \mathrm{W}(\mathcal{G} \setminus \mathset{V}'), \mathset{V}' = anc(v_i) \cup \{v_i\},\ \forall i=1,...,n\\
    \end{aligned}
\end{equation}
where $m^\text{LW}_i$, $m^\text{IW}_i$, and $m^\text{OW}_i$ denote the LW, IW, and OW of node $v_i$, respectively; $\mathcal{G} \setminus \mathset{V}'$ denotes the graph with the nodes in node set $\mathset{V}'$ and all their connected edges removed from graph $\mathcal{G}$. The time complexity of computing each node-level parallelism attribute is the same as the complexity of computing the graph width.

\begin{example}
The critical antichains of the DAG in Example \ref{eg:dag} are $\{v_2,v_3,v_4\}$ and $\{v_3,v_4,v_5\}$, and the DAG width is 3. The LW, IW and OW of each node are illustrated in Fig.~\ref{fig:eg_parallelism}.

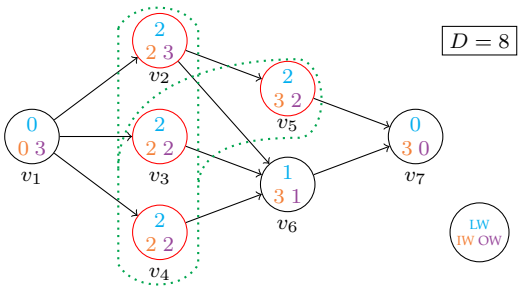
\begin{figure}[ht]
\centering
\begin{tikzpicture}[scale=0.85]
    [
        ->,>=stealth',shorten >=1pt,auto,node distance=2.8cm,semithick,node font=\footnotesize
    ]
    \tikzset
    {
        in place/.style=
        {
          auto=false,
          fill=white,
          inner sep=2pt,
        },
    }
    \node (1) [circle, draw, align=center, label={below:$v_1$}, inner sep=1pt] at (0.0,1.5) 
    {
        $\textcolor{\clw}{0}$ \\ $\textcolor{\ciw}{0}$ $\textcolor{\cow}{3}$
    };
    \node (2) [circle, draw, align=center, label={[yshift=-1.04cm]:$v_2$}, inner sep=1pt, red] at (2.0,3.0) 
    {
        $\textcolor{\clw}{2}$ \\ $\textcolor{\ciw}{2}$ $\textcolor{\cow}{3}$
    };
    \node (3) [circle, draw, align=center, label=below:$v_3$, inner sep=1pt, red]  at (2.0,1.5)
    {
        $\textcolor{\clw}{2}$ \\ $\textcolor{\ciw}{2}$ $\textcolor{\cow}{2}$
    };
    \node (4) [circle, draw, align=center, label=below:$v_4$, inner sep=1pt, red]  at (2.0,0.0)
    {
        $\textcolor{\clw}{2}$ \\ $\textcolor{\ciw}{2}$ $\textcolor{\cow}{2}$
    };
    \node (5) [circle, draw, align=center, label={[yshift=-1.04cm]:$v_5$}, inner sep=1pt, red]  at (4.0,2.25)
    {
        $\textcolor{\clw}{2}$ \\ $\textcolor{\ciw}{3}$ $\textcolor{\cow}{2}$
    };
    \node (6) [circle, draw, align=center, label=below:$v_6$, inner sep=1pt]  at (4.0,0.75)
    {
        $\textcolor{\clw}{1}$ \\ $\textcolor{\ciw}{3}$ $\textcolor{\cow}{1}$
    };
    \node (7) [circle, draw, align=center, label=below:$v_7$, inner sep=1pt]  at (6.0,1.5)
    {
        $\textcolor{\clw}{0}$ \\ $\textcolor{\ciw}{3}$ $\textcolor{\cow}{0}$
    };

    \node (legend) [circle, draw, align=center, label=below:, inner sep=1pt, font=\tiny]  at (7.0,0.0)
    {
        \textcolor{\clw}{LW} \\ \textcolor{\ciw}{IW} \textcolor{\cow}{OW}
    };
    
    \path [->] (1) edge (2);
    \path [->] (1) edge (3);
    \path [->] (1) edge (4);
    \path [->] (2) edge (5);
    \path [->] (2) edge (6);
    \path [->] (3) edge (6);
    \path [->] (4) edge (6);
    \path [->] (5) edge (7);
    \path [->] (6) edge (7);

    \draw [-, dotted, thick, Green] (1.35, 3.2)  to (1.35, -0.5) to[bend right=60]  (2.6, -0.5) to (2.6, 3.2) to[bend right=60] (1.35, 3.2);
    \draw [-, dotted, thick, Green] (1.35, 1.0) to [out=90, in=180] (4.18, 2.71) to [out=-10, in=0] (4.16, 1.48) to [out=180, in=90] (2.6, 0.7);


    \node[draw] at (7.0, 3.0) {$D=8$};
\end{tikzpicture}
\caption{Example of DAG width and node-level parallelism attributes. The nodes belonging to the critical antichains are marked red, and a dotted line surrounds each antichain. The numbers inside each node represent the node's LW, IW, and OW with corresponding colors.}
\label{fig:eg_parallelism}
\end{figure}%
\end{example}

\section{Schedulability Test}
\label{sec:sched_test}

We present an exact schedulability test for a DAG task based on the concept of \textit{trivial schedulability} defined by the length and width of the DAG: 

\begin{definition}[Trivial schedulability]\label{def:trivial_sched}
    A DAG task $(\mathcal{G},D)$ is \textit{trivially schedulable} on $M$ processors if it satisfies the following two conditions:
    \begin{enumerate}
        \item the length of $\mathcal{G}$ is no larger than the task deadline: $\mathrm{L}(\mathcal{G}) \leq D$;
        \item the width of $\mathcal{G}$ is no larger than the number of processors: $\mathrm{W}(\mathcal{G}) \leq M$.
    \end{enumerate}
\end{definition}

Based on Definition~\ref{def:trivial_sched}, we can derive several important properties of a trivially schedulable DAG task, which are later used to prove our schedulability test. 

First, we show that a trivially schedulable DAG is guaranteed to be schedulable under a global dispatching strategy by formulating the following Lemmas~\ref{lem:active_jobs}~-~\ref{lem:global_sufficient_condition}.

\begin{lemma}\label{lem:active_jobs}
    If a DAG task $(\mathcal{G},D)$ is \textit{trivially schedulable} on $M$ processors, then at most $M$ sub-jobs are active at the same time.
\end{lemma}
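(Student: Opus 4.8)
The plan is to show that at every instant $t$ the set of \emph{active} sub-jobs — meaning the sub-jobs that are ready (released with all predecessors finished) but have not themselves finished — forms an antichain of $\mathcal{G}$; the claimed bound then drops out immediately from the width condition of trivial schedulability. So the whole proof reduces to the structural statement ``the active sub-jobs are pairwise non-reachable,'' plus one invocation of Definitions~\ref{def:width} and~\ref{def:trivial_sched}.

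First I would isolate the key structural fact: if at time $t$ every immediate predecessor of a node $v$ has finished, then in fact \emph{every} ancestor of $v$ has finished by $t$. This is just transitivity of precedence constraints, which I would make precise by walking backwards along an arbitrary directed path $u = u_0 \to u_1 \to \dots \to u_\ell = v$. The node $u_{\ell-1}$ has finished by $t$ since it is an immediate predecessor of $v$; and because a non-preemptive sub-job can finish only after it has started, which in turn can happen only once all of its own immediate predecessors have finished, a short downward induction on $j$ shows each $u_j$, and in particular $u=u_0$, has finished by $t$. Zero-WCET source/sink nodes cause no trouble here, since the ``ready before it can run'' requirement still holds at the instant such a node completes.

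Next I would use this to conclude that any two active sub-jobs $v$ and $w$ at time $t$ are non-reachable from one another. Suppose not, say $w \in anc(v)$. Since $v$ is active, all immediate predecessors of $v$ have finished by $t$, so by the fact above $w$ has finished by $t$ — contradicting that $w$ is active. Hence the set $S$ of active sub-jobs at time $t$ is an antichain, so by Definition~\ref{def:width} we have $|S| \le \mathrm{W}(\mathcal{G})$, and by the second condition of Definition~\ref{def:trivial_sched} this is at most $M$.

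I expect the main obstacle to be bookkeeping rather than any real depth: carefully formalizing ``has finished by $t$'' and the ``finishes only after starting only after predecessors finished'' chain (especially with zero-WCET nodes), and making sure the argument is phrased for the correct job instance — which, under the constrained deadline $D \le T$, is unambiguous once earlier lemmas establish that a job completes before the next one is released, so that the active sub-jobs at any instant all belong to a single job. None of this should need more than the definition of precedence constraints together with Definitions~\ref{def:width} and~\ref{def:trivial_sched}.
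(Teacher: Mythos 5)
Your proposal is correct and follows essentially the same route as the paper: the paper's proof (by contradiction) likewise rests on the observation that simultaneously active sub-jobs are pairwise non-reachable and hence form an antichain bounded by $\mathrm{W}(\mathcal{G}) \leq M$. You merely make explicit, via the transitivity-of-precedence induction, the step the paper treats as immediate.
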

\begin{proof}
    We prove the lemma by contradiction. Suppose there are $M+1$ active sub-jobs at the same time. Since two sub-jobs can be active at the same time only if they do not have precedence constraints, there must be $M+1$ nodes that are pair-wise non-reachable. Thus, they constitute an antichain of size $M+1$. By Definition~\ref{def:width}, the width of $\mathcal{G}$ is thus at least $M+1$, which contradicts the width constraint $\mathrm{W}(\mathcal{G}) \leq M$ in Definition~\ref{def:trivial_sched}. 
\end{proof}

\begin{lemma}\label{lem:start_time}
    If a DAG task $(\mathcal{G},D)$ is \textit{trivially schedulable} on $M$ processors, each sub-job $v_i \in \mathcal{V}$ can start execution at its ready time using any global \revise{work-conserving} dispatching strategies.
\end{lemma}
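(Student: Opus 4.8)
The plan is to derive this directly from Lemma~\ref{lem:active_jobs} by a short counting argument, with no reference to the length condition of Definition~\ref{def:trivial_sched}. Recall the intended meaning of the terms: a sub-job is \emph{active} from the instant it becomes ready (all its predecessors have finished) until the instant it finishes, so ``active'' covers both sub-jobs waiting in the ready queue and sub-jobs currently executing; and a \emph{work-conserving} global dispatcher never leaves a processor idle while some ready, not-yet-started sub-job exists. These are exactly the ingredients Lemma~\ref{lem:active_jobs} was stated to feed into.

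First I would assume, for contradiction, that under some global work-conserving strategy there is a sub-job $v_i$ that becomes ready at a time $t$ but does not begin execution at $t$. Since $v_i$ is ready at $t$ and has not started, work-conservation forces all $M$ processors to be busy executing sub-jobs in a right-neighborhood of $t$ (otherwise $v_i$ would immediately be dispatched to an idle processor). Call these executing sub-jobs $u_1,\dots,u_M$; they are pairwise distinct, and none of them equals $v_i$, which is not executing.

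Next I would observe that each $u_k$ is active (it is executing) and $v_i$ is active (it is ready but not finished), so $\{u_1,\dots,u_M,v_i\}$ is a set of $M+1$ distinct simultaneously active sub-jobs. This contradicts Lemma~\ref{lem:active_jobs}, which guarantees at most $M$ active sub-jobs at any time. Hence no such $v_i$ exists, and every sub-job can start at its ready time. Only the width condition $\mathrm{W}(\mathcal{G})\le M$ of Definition~\ref{def:trivial_sched} is invoked here.

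The one point requiring care is the treatment of the instant $t$: at $t$ several sub-jobs may finish and several others may become ready at once, so the counting should be done on a half-open interval immediately after $t$, after the dispatcher has reacted to all completions, while $v_i$ still has not started. Once this bookkeeping is fixed, the $M+1$ active sub-jobs are genuinely simultaneous and the contradiction holds. I expect this boundary bookkeeping (and, if desired, rephrasing the whole argument as an induction over dispatch events) to be the only mildly delicate part; everything else is immediate from Lemma~\ref{lem:active_jobs}.
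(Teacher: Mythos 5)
Your proof is correct and follows essentially the same route as the paper's: assume a ready sub-job cannot start, use work-conservation to conclude all $M$ processors are busy, and obtain $M+1$ simultaneously active sub-jobs contradicting Lemma~\ref{lem:active_jobs}. The extra care you take with the boundary instant $t$ is a reasonable refinement the paper glosses over, but the argument is the same.
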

\begin{proof}
    We prove the lemma by contradiction. Suppose a sub-job of node $v_i$ cannot start its execution at its ready time $r_i$. Since we consider a global \revise{work-conserving} dispatching strategy, all $M$ processors must be busy executing other sub-jobs at time $r_i$. 
    Thus, we know that at least $M+1$ sub-jobs (including $v_i$) are active at time $r_i$, which contradicts Lemma~\ref{lem:active_jobs}.
\end{proof}

By Lemma~\ref{lem:start_time}, we know that a trivially schedulable DAG task will not have any ready sub-jobs waiting for processors to become idle. 
Therefore, it is not necessary to use a prioritized queue under a global dispatching strategy if the DAG task is trivially schedulable.

Additionally, Lemma~\ref{lem:start_time} allows us to derive the schedulability of a trivially schedulable DAG task under a global \revise{work-conserving} dispatching strategy:

\begin{lemma}\label{lem:global_sufficient_condition}
    If a DAG task $(\mathcal{G},D)$ is \textit{trivially schedulable} on $M$ processors, then it is \textit{schedulable} on $M$ processors using any global \revise{work-conserving} dispatching strategies.
\end{lemma}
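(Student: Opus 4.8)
The plan is to leverage Lemma~\ref{lem:start_time}, which already tells us that under any global work-conserving dispatching strategy every sub-job $v_i$ begins executing exactly at its ready time $r_i$. Since each sub-task is a non-preemptive sequential workload of length $C_i$, this immediately fixes the finishing time of $v_i$ to be $f_i = r_i + C_i$. So the entire schedule is determined (up to processor identity, which is irrelevant here), and the remaining work is purely to bound $f_i$ by $D$ for every node.

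First I would make the ready-time recursion explicit: the source node is ready at the job's release instant (WLOG time $0$), and for any non-source node $v_i$ we have $r_i = \max_{v_j \in pre(v_i)} f_j$, because $v_i$ becomes ready precisely when the last of its predecessors completes (and, by Lemma~\ref{lem:start_time}, no additional queueing delay is incurred). Then I would prove by induction along a topological order of $\mathcal{G}$ that $f_i$ equals the length of the longest path from the source to $v_i$ (with $v_i$ included). The base case is the source, with $f_{\text{source}} = C_{\text{source}}$; the inductive step substitutes the hypothesis into $f_i = \max_{v_j \in pre(v_i)} f_j + C_i$ and recognizes the right-hand side as exactly the longest-path length ending at $v_i$.

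Next I would conclude: since every node $v_i$ lies on some path to the sink, and finishing times are strictly increasing along any path (each step adds a nonnegative WCET and the edge forces $f_j \le r_i$), we get $f_i \le f_{\text{sink}}$ for all $i$. By the induction above applied to the sink, $f_{\text{sink}}$ equals the length of the longest complete path, i.e. $f_{\text{sink}} = \mathrm{L}(\mathcal{G})$ by Definition~\ref{def:length}. Because the DAG task is trivially schedulable, condition~1 of Definition~\ref{def:trivial_sched} gives $\mathrm{L}(\mathcal{G}) \le D$, hence $f_i \le D$ for every sub-job, which is exactly the definition of schedulability on $M$ processors.

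I expect the only delicate point to be the clean justification that ``every sub-job starts at its ready time'' (Lemma~\ref{lem:start_time}) genuinely pins down a consistent, well-defined schedule whose finishing times satisfy the recursion $f_i = r_i + C_i$ with $r_i = \max_{v_j \in pre(v_i)} f_j$ — in particular that no circular dependency or processor conflict can perturb these values; this follows because $\mathcal{G}$ is acyclic and Lemma~\ref{lem:active_jobs} guarantees an idle processor is always available, so the argument is short, but it is the step that ties the dispatching model to the graph-theoretic length bound and deserves to be stated carefully.
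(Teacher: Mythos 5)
Your proposal is correct and follows essentially the same route as the paper: invoke Lemma~\ref{lem:start_time} to eliminate queueing delay, observe that the sink's finishing time then equals the critical-path length, and close with $\mathrm{L}(\mathcal{G}) \leq D$ from Definition~\ref{def:trivial_sched}. You merely spell out via induction the step the paper states in one line (that the sink's response time is given by the longest path), which is a fair amount of extra rigor but not a different argument.
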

\begin{proof}
    By Lemma~\ref{lem:start_time}, we know each sub-job can find at least an idle processor to start its execution at its ready time. Thus, the response time of the sink node is given by the length of the critical path in $\mathcal{G}$. By Definition~\ref{def:trivial_sched}, we know $\mathrm{L}(\mathcal{G}) \leq D$. Therefore, the sink node finishes its execution no later than the task deadline. Since the sink node is a descendant of all other nodes, all the nodes finish their execution by the task deadline. 
\end{proof}

Second, we show that a trivially \revise{schedulable} DAG task is also schedulable using the partitioned dispatching strategy in \revise{Algorithm~\ref{alg:partition}}.

{

\begin{algorithm} [htb]
    \caption{\revise{Partitioned dispatching strategy}}
    \label{alg:partition}

    \KwIn{($\mathcal{G},D)$: a trivially schedulable DAG task on $M$ processors\;}
    \KwOut{a partitioned schedule\;}
    Split $\mathcal{G}$ into $M$ paths $\{p_1,...,p_M\}$ using a minimum path cover algorithm (\eg, \cite{hopcroft1973n})\;
    \For{$k \gets 1$ to $M$}
    {
        Assign the nodes in $p_k$ to processor $k$ and specify their execution order according to the precedence constraints in $\mathcal{G}$\;
    }
\end{algorithm}
}

\begin{lemma}\label{lem:partitioned_sufficient_condition}
    If a DAG task $(\mathcal{G},D)$ is \textit{trivially schedulable} on $M$ processors, then it is \textit{schedulable} on $M$ processors using the partitioned dispatching strategy in \revise{Algorithm~\ref{alg:partition}}.
\end{lemma}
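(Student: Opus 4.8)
The plan is to bound the actual completion times produced by Algorithm~\ref{alg:partition} from above by the node-level earliest finishing times: I will show that every node $v_i$ finishes no later than $t^\text{EFT}_i$ of \eqref{eq:node_level_timing}, which by its fixed-point characterization is exactly the length of the longest path in $\mathcal{G}$ ending at $v_i$. Since that path always extends to the unique sink, $t^\text{EFT}_i \le \mathrm{L}(\mathcal{G})$ for every $v_i$, and $\mathrm{L}(\mathcal{G}) \le D$ is condition~(1) of trivial schedulability; hence finishing every node by its $t^\text{EFT}$ already proves the lemma.

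Before the induction I would make the output of Algorithm~\ref{alg:partition} precise, which is the crux of the argument. By Definition~\ref{def:width} and Dilworth's theorem, $\mathrm{W}(\mathcal{G}) \le M$ guarantees that $\mathcal{V}$ can be covered by $M$ \emph{chains} of the reachability order; this is exactly what the minimum path cover in the first line returns (run, as in Section~\ref{sec:preliminaries}, on the transitive closure). So each $p_k = (v_{k_1}, v_{k_2}, \dots)$ satisfies $v_{k_t} \in anc(v_{k_{t+1}})$, and in particular the node $v_j$ immediately preceding a node $v_i$ on its path is an ancestor of $v_i$. Consequently the per-processor execution order fixed in the loop is the chain order of $p_k$, so the only node that can keep processor $k$ busy when it becomes $v_i$'s turn is its chain-predecessor $v_j$. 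I will also use one elementary monotonicity fact: if $v_j \in anc(v_i)$ then $t^\text{EFT}_j \le t^\text{EST}_i = t^\text{EFT}_i - C_i$, because prepending the longest path ending at $v_j$ to any path from $v_j$ to $v_i$ produces a path ending at $v_i$ of length at least $t^\text{EFT}_j + C_i$.

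The core claim---that under Algorithm~\ref{alg:partition} every $v_i$ completes by $t^\text{EFT}_i$---then follows by induction along a topological order of $\mathcal{G}$. For the source node the claim is immediate, since it has no predecessors, is first on its chain, and starts at time $0 = t^\text{EST}$ of the source. For a non-source $v_i$ on $p_k$ with chain-predecessor $v_j$, under the work-conserving partitioned dispatch $v_i$ begins execution as soon as (a)~every node of $pre(v_i)$ has completed and (b)~$v_j$ has completed, i.e.\ processor $k$ is free. By the induction hypothesis each $v_\ell \in pre(v_i)$ finishes by $t^\text{EFT}_\ell$, so (a) holds by time $\max_{v_\ell \in pre(v_i)} t^\text{EFT}_\ell = t^\text{EST}_i$; and $v_j$ finishes by $t^\text{EFT}_j \le t^\text{EST}_i$ by the fact just recorded, so (b) holds by $t^\text{EST}_i$ as well. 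Since execution is non-preemptive, $v_i$ then finishes by $t^\text{EST}_i + C_i = t^\text{EFT}_i$, completing the induction. In particular every node finishes by $t^\text{EFT}_i \le \mathrm{L}(\mathcal{G}) \le D$, so the DAG task is schedulable.

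I expect the main obstacle to be conceptual rather than computational: being careful that the path cover certifying the width is a \emph{chain} cover, so that two consecutive nodes of a ``path'' are comparable in the reachability order but need not be joined by a direct edge, and that under the fixed per-processor ordering ``processor $k$ is idle'' coincides with ``the chain-predecessor of the current node has finished''. Once that identification is made, everything collapses to the longest-path reading of $t^\text{EFT}$, the bound $t^\text{EFT}_i \le \mathrm{L}(\mathcal{G})$, and a one-line induction. Note that, in contrast to the global case, this lemma does not invoke Lemmas~\ref{lem:active_jobs}--\ref{lem:global_sufficient_condition}.
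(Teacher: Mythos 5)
Your proof is correct, and it follows the same basic decomposition as the paper's: invoke Dilworth's theorem / the minimum path cover to turn $\mathrm{W}(\mathcal{G})\leq M$ into $M$ chains, assign one chain per processor, and use $\mathrm{L}(\mathcal{G})\leq D$ to conclude. The difference is in rigor rather than strategy. The paper's proof stops at ``the length of each path is at most $\mathrm{L}(\mathcal{G})\leq D$, therefore assigning each path to a processor yields a feasible schedule,'' which quietly skips over the fact that a node may have to idle-wait for a predecessor living on a \emph{different} processor's chain (this actually happens in Fig.~\ref{fig:schedule_1}, where $v_6$ on $P_2$ waits for $v_2$ on $P_1$), so a path's completion time is not simply the sum of its WCETs. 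Your induction along a topological order --- every node finishes by $t^\text{EFT}_i$, because its DAG-predecessors finish by their EFTs and its chain-predecessor, being an ancestor, satisfies $t^\text{EFT}_j\leq t^\text{EST}_i$ --- is exactly the missing argument that makes the cross-chain waits harmless, and your insistence that consecutive nodes on a cover path are comparable in the reachability order (not necessarily adjacent in $\mathcal{E}$) is a correct and necessary reading of line~1 of Algorithm~\ref{alg:partition}. So your write-up buys a watertight justification of the paper's one-line claim at the cost of some extra machinery; the paper buys brevity at the cost of a genuine (if repairable) gap.
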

\begin{proof}
    By Definition~\ref{def:trivial_sched}, since $(\mathcal{G},D)$ is trivially schedulable on $M$ processors, we know $\mathrm{L}(\mathcal{G}) \leq D$ and $\mathrm{W}(\mathcal{G}) \leq M$. Since $\mathrm{W}(\mathcal{G}) \leq M$, there exist $M$ paths $\{p_1,...,p_{M}\}$ that can cover all the nodes in $\mathcal{G}$ \revise{(line 1, Algorithm~\ref{alg:partition})}. Since $\mathrm{L}(\mathcal{G}) \leq D$, the length of each path in $\{p_1,...,p_{M}\}$ is smaller than or equal to $\mathrm{L}(\mathcal{G}) \leq D$. Therefore, we can construct a feasible schedule of task $(\mathcal{G},D)$ by assigning each path in $\{p_1,...,p_{M}\}$ to a unique processor, where the execution order is determined according to the precedence constraints \revise{(lines 2-3, Algorithm~\ref{alg:partition})}. 
\end{proof}

Lemma~\ref{lem:partitioned_sufficient_condition} not only proves the schedulability of a trivially schedulable DAG task, but also provides a way to generate a feasible assignment of nodes to processors that can be used by the partitioned dispatching strategy. We use the following example to show the generation process.

\begin{example}\label{eg:trivial_sched}
The trivial schedule of the DAG task in Fig.~\ref{fig:schedule_1} (left) on $M=3$ processors is illustrated in Fig.~\ref{fig:schedule_1} (right). The DAG is split into $3$ paths: $p_1=\{v_1, v_2, v_5, v_7\}$, $p_2=\{v_3,v_6\}$, and $p_3=\{v_4\}$, which are mapped to $P_1, P_2$, and $P_3$, respectively.
\end{example}

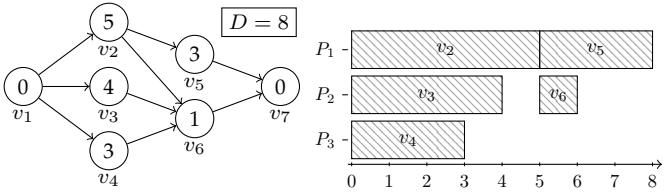
\begin{figure}[t]

\begin{subfigure}{.45\linewidth}
    \centering
    \begin{tikzpicture}[scale=0.57]
        [
            ->,>=stealth',shorten >=1pt,auto,semithick,node font=\footnotesize
        ]
        \node (1) [circle, draw, label={[yshift=-0.83cm]$v_1$}]  at (0.0,1.5) {0};
        \node (2) [circle, draw, label={[yshift=-0.83cm]$v_2$}]  at (2.0,3.0) {5};
        \node (3) [circle, draw, label={[yshift=-0.83cm]$v_3$}]  at (2.0,1.5) {4};
        \node (4) [circle, draw, label={[yshift=-0.83cm]$v_4$}]  at (2.0,0.0) {3};
        \node (5) [circle, draw, label={[yshift=-0.83cm]$v_5$}]  at (4.0,2.25) {3};
        \node (6) [circle, draw, label={[yshift=-0.83cm]$v_6$}]  at (4.0,0.75) {1};
        \node (7) [circle, draw, label={[yshift=-0.83cm]$v_7$}]  at (6.0,1.5) {0};
        \path [->] (1) edge (2);
        \path [->] (1) edge (3);
        \path [->] (1) edge (4);
        \path [->] (2) edge (5);
        \path [->] (2) edge (6);
        \path [->] (3) edge (6);
        \path [->] (4) edge (6);
        \path [->] (5) edge (7);
        \path [->] (6) edge (7);
        \node[draw] at (5.5, 3.0) {$D=8$};
    \end{tikzpicture}
\end{subfigure}
\begin{subfigure}{.45\linewidth}
    \centering
    \begin{tikzpicture}[scale=0.5]

    \draw (-0.25,0.5) -- (-0.1,0.5) node[xshift=-0.30cm,scale=0.7] {$P_3$};
    \draw (-0.25,1.7) -- (-0.1,1.7) node[xshift=-0.30cm,scale=0.7] {$P_2$};
    \draw (-0.25,2.9) -- (-0.1,2.9) node[xshift=-0.30cm,scale=0.7] {$P_1$};

    \draw[->] (-0.1,-0.15) -- (8.25,-0.15);
    \foreach \x in {0,...,8}
        \draw (\x,-0.20) -- (\x,-0.1) node[yshift=-0.25cm,scale=0.7] {$\x$};

    \draw[pattern color=gray!70,pattern=north west lines]   (0.0,2.4) rectangle (5.0,3.4) node[pos=.5,scale=0.75] {$v_2$};
    \draw[pattern color=gray!70,pattern=north west lines]   (5.0,2.4) rectangle (8.0,3.4) node[pos=.5,scale=0.75] {$v_5$};
    \draw[pattern color=gray!70,pattern=north west lines]   (0.0,1.2) rectangle (4.0,2.2) node[pos=.5,scale=0.75] {$v_3$};
    \draw[pattern color=gray!70,pattern=north west lines]   (5.0,1.2) rectangle (6.0,2.2) node[pos=.5,scale=0.75] {$v_6$};
    \draw[pattern color=gray!70,pattern=north west lines]   (0.0,0.0) rectangle (3.0,1.0) node[pos=.5,scale=0.75] {$v_4$};

    
    \node at (8,0) {};
    
    \end{tikzpicture}
\end{subfigure}

\caption{Example of the trivial schedule with $M=3$.}
\label{fig:schedule_1}
\end{figure}

Now, we use Lemma~\ref{lem:global_sufficient_condition} and \ref{lem:partitioned_sufficient_condition} to derive the following exact schedulability test that can be used in conjunction with global and partitioned dispatching strategies.

\begin{theorem}[Schedulability test]\label{theorem:sched_test}
    A DAG task $(\mathcal{G},D)$ is schedulable on $M$ processors \remove{under global and partitioned dispatching strategies }if and only if there exists a trivially schedulable DAG task $(\mathcal{G}',D)$,
    where $\mathcal{G}' \supseteq_E \mathcal{G}$ (\textit{i.e.}, $\mathset{V}'=\mathset{V}$ and $\mathset{E}' \supseteq \mathset{E}$ in the transitive closures of $\mathcal{G}$ and $\mathcal{G'}$).
\end{theorem}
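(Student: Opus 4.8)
The plan is to prove the two implications separately. The forward direction (sufficiency of the trivial-schedulability reformulation) should follow almost immediately from the lemmas already established, while the reverse direction (necessity) requires turning an arbitrary feasible schedule into an edge-augmented DAG.

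For sufficiency, I would assume that a trivially schedulable $(\mathcal{G}',D)$ with $\mathcal{G}'\supseteq_E\mathcal{G}$ exists. By Lemma~\ref{lem:global_sufficient_condition} (or Lemma~\ref{lem:partitioned_sufficient_condition}), $(\mathcal{G}',D)$ admits a feasible schedule $S'$ on $M$ processors. Since $\mathcal{G}'$ and $\mathcal{G}$ have the same nodes and the same WCETs, and every precedence constraint of $\mathcal{G}$ also holds (transitively, hence as an ordering constraint on start/finish times) in $\mathcal{G}'$, the schedule $S'$ is also feasible for $(\mathcal{G},D)$ and still meets the deadline; therefore $(\mathcal{G},D)$ is schedulable. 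This step is routine.

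For necessity, I would fix a feasible schedule $S$ of $(\mathcal{G},D)$ on $M$ processors, with start/finish times $s_i$ and $f_i = s_i + C_i$ (so $0\le s_i$ and $f_i\le D$) and a processor assignment. Because execution is non-preemptive, the nodes on each processor $P_k$ occupy pairwise-disjoint intervals, so I can list them as $v^{(k)}_1,v^{(k)}_2,\dots$ sorted by start time, with ties broken by a fixed topological order $\sigma$ of $\mathcal{G}$. I then build $\mathcal{G}'$ from $\mathcal{G}$ by adding the edge $v^{(k)}_j\to v^{(k)}_{j+1}$ for every processor $P_k$ and every consecutive pair in its list. Three checks then remain: (i) $\mathcal{G}'$ is acyclic and its transitive closure contains that of $\mathcal{G}$ — the inclusion is clear since edges are only added, and acyclicity follows because every edge of $\mathcal{G}'$, original or added, strictly increases the lexicographic key $\rho_i=(s_i,\sigma(i))$, using $f_i\le s_j$ on each edge together with the $\sigma$-tie-break when start times coincide; (ii) $\mathrm{W}(\mathcal{G}')\le M$ — the at most $M$ per-processor chains are paths of $\mathcal{G}'$ that together cover all $n$ nodes, so the path-cover characterization of width in Definition~\ref{def:width} gives the bound; (iii) $\mathrm{L}(\mathcal{G}')\le D$ — along any path $v_{p_1}\to\cdots\to v_{p_m}$ of $\mathcal{G}'$ each edge gives $s_{p_k}+C_{p_k}=f_{p_k}\le s_{p_{k+1}}$, so telescoping yields $f_{p_m}\ge s_{p_1}+\sum_{k=1}^m C_{p_k}=s_{p_1}+\mathrm{L}(p)$, and with $s_{p_1}\ge 0$ and $f_{p_m}\le D$ this gives $\mathrm{L}(p)\le D$. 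Hence $(\mathcal{G}',D)$ is trivially schedulable, which completes the proof.

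The step I expect to need the most care is the acyclicity argument in (i): adding the per-processor chain edges could a priori close a directed cycle together with some original edges, and ruling this out cleanly is exactly why one wants the monotone key $(s_i,\sigma(i))$ rather than the start time alone — the topological-order tie-break is what handles zero-WCET nodes, whose start and finish times may coincide with those of their neighbours. The width and length bounds, by contrast, are short once the construction is in place: the length bound is a one-line telescoping sum and the width bound is a direct appeal to Definition~\ref{def:width}.
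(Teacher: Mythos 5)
Your proposal is correct and follows essentially the same route as the paper: sufficiency via Lemmas~\ref{lem:global_sufficient_condition}/\ref{lem:partitioned_sufficient_condition} plus the observation that a schedule of $(\mathcal{G}',D)$ is a schedule of $(\mathcal{G},D)$, and necessity by turning the per-processor execution order of a feasible schedule into added edges whose chains give $\mathrm{W}(\mathcal{G}')\leq M$ and whose respected precedences give $\mathrm{L}(\mathcal{G}')\leq D$. If anything, your version is slightly more careful than the paper's: the paper's edge set $\hat{\mathcal{E}}_k$ is defined with the non-strict condition $s(v_i)\leq s(v_j)$ (admitting self-loops and two-cycles when zero-WCET nodes tie) and leaves acyclicity and the telescoping length bound implicit, whereas your lexicographic key $(s_i,\sigma(i))$ and explicit telescoping close those gaps.
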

\revise{
\begin{proof}
    \textbf{Sufficiency}.
    Suppose we have a graph $\mathcal{G}' \supseteq_E \mathcal{G}$, and task $(\mathcal{G}',D)$ is trivially schedulable on $M$ processors. By Lemma~\ref{lem:global_sufficient_condition} (\textit{resp.}, Lemma~\ref{lem:partitioned_sufficient_condition}), we know that task $(\mathcal{G}',D)$ is schedulable using a global (\textit{resp.}, partitioned) dispatching strategy. Since task $(\mathcal{G}',D)$ has the same nodes and deadline with task $(\mathcal{G},D)$, and all the precedence constraints in $\mathcal{G}$ are included in $\mathcal{G}'$ (\emph{i.e.}, $\mathset{E}' \supseteq \mathset{E}$), a feasible schedule of task $(\mathcal{G}',D)$ is also a feasible schedule of task $(\mathcal{G},D)$. Thus, task $(\mathcal{G},D)$ is schedulable on $M$ processors.
    
    \textbf{Necessity}.
    We prove the necessity by showing that a trivially schedulable DAG task $(\mathcal{G}' \supseteq_E \mathcal{G},D)$ exists if task $(\mathcal{G},D)$ is schedulable on $M$ processors. 
    Suppose $\mathcal{S}$ is a static schedule of $(\mathcal{G},D)$ on $M$ processors. The static schedule $\mathcal{S}$ specifies the processor allocation $p(v_i) \in \{1, \dots ,M\}$ and execution starting time $s(v_i), \forall v_i\in\mathcal{V}$. 
    Using the set of edges describing the precedence per processor $k$ as 
    $$\hat{\mathcal{E}}_k = \{e_{ij},~\forall i,j  ~|~p(v_i) = p(v_j) = k \land s(v_i) \leq s(v_j) \}$$
    and 
    $\hat{\mathcal{E}} = \hat{\mathcal{E}}_1 \cup ...\cup \hat{\mathcal{E}}_M$, 
    we can construct the graph $\mathcal{G}' = (\mathcal{V}, \mathcal{E} \cup \hat{\mathcal{E}})$. 
    By construction, the nodes in $\mathcal{G}'$ can be covered by $M$ paths, each of which contains the nodes assigned to a processor $k$, whose precedence is described by $\hat{\mathcal{E}}_k$, yielding $\mathrm{W}(\mathcal{G}') \leq M$. In addition, since $\mathcal{S}$ respects all precedence constraints in $\mathcal{E}$ and $\hat{\mathcal{E}}$, it is a feasible schedule of task $(\mathcal{G}',D)$, thus $\mathrm{L}(\mathcal{G}') \leq D$. Since $\mathrm{W}(\mathcal{G}')\leq M$ and $\mathrm{L}(\mathcal{G}')\leq D$, $\mathcal{G}' \supseteq_E \mathcal{G}$ is trivially schedulable on $M$ processors.
\end{proof}
}

From Theorem \ref{theorem:sched_test}, we know that a feasible schedule of task $(\mathcal{G},D)$ on fewer processors than the current DAG width can be obtained by generating a graph $\mathcal{G}'$ that has additional edges to $\mathcal{G}$ and satisfies both the length and width constraints. 
Example~\ref{eg:trivial_sched_2} shows a DAG generated by adding edge $e_{3,4}$ to the DAG task in Example~\ref{eg:trivial_sched}. As illustrated in \revise{Fig.}~\ref{fig:schedule_2}, the resulting DAG is trivially schedulable on $M=2$ processors (compared to $M=3$ in Example~\ref{eg:trivial_sched}). 
Based on this observation, we propose a new DAG scheduling algorithm called Edge Generation Scheduling (EGS) to minimize processor usage in Section~\ref{sec:egs}.

\begin{example}\label{eg:trivial_sched_2}
The trivial schedule of the DAG task in Fig.~\ref{fig:schedule_2} (left) on $M=2$ processors is illustrated in Fig.~\ref{fig:schedule_2} (right). The DAG is split into $2$ paths: $p_1=\{v_1, v_2, v_5, v_7\}$ and $p_2=\{v_3,v_4,v_6\}$, which are mapped to $P_1$ and $P_2$, respectively.
\end{example}
\begin{figure}[t]
\begin{subfigure}{.45\linewidth}
    \centering
    \begin{tikzpicture}[scale=0.57]
        [
            ->,>=stealth',shorten >=1pt,auto,semithick,node font=\footnotesize
        ]
        \node (1) [circle, draw, label={[yshift=-0.83cm]$v_1$}]  at (0.0,1.5) {0};
        \node (2) [circle, draw, label={[yshift=-0.83cm]$v_2$}]  at (2.0,3.0) {5};
        \node (3) [circle, draw, label={[yshift=-0.83cm]$v_3$}]  at (2.0,1.5) {4};
        \node (4) [circle, draw, label={[yshift=-0.83cm]$v_4$}]  at (2.0,0.0) {3};
        \node (5) [circle, draw, label={[yshift=-0.83cm]$v_5$}]  at (4.0,2.25) {3};
        \node (6) [circle, draw, label={[yshift=-0.83cm]$v_6$}]  at (4.0,0.75) {1};
        \node (7) [circle, draw, label={[yshift=-0.83cm]$v_7$}]  at (6.0,1.5) {0};
        \path [->] (1) edge (2);
        \path [->] (1) edge (3);
        \path [->] (2) edge (5);
        \path [->] (2) edge (6);
        \path [->] (4) edge (6);
        \path [->] (5) edge (7);
        \path [->] (6) edge (7);
        \path [->, Green] (3) edge[bend right=50] (4);
        \node[draw] at (5.5, 3.0) {$D=8$};
    \end{tikzpicture}
\end{subfigure}
\begin{subfigure}{.45\linewidth}
    \centering
    \begin{tikzpicture}[scale=0.5]

    \draw (-0.25,0.5) -- (-0.1,0.5) node[xshift=-0.30cm,scale=0.7] {$P_2$};
    \draw (-0.25,1.7) -- (-0.1,1.7) node[xshift=-0.30cm,scale=0.7] {$P_1$};

    \draw[->] (-0.1,-0.15) -- (8.25,-0.15);
    \foreach \x in {0,...,8}
        \draw (\x,-0.20) -- (\x,-0.1) node[yshift=-0.25cm,scale=0.7] {$\x$};

    \draw[pattern color=gray!70,pattern=north west lines]   (0.0,1.2) rectangle (5.0,2.2) node[pos=.5,scale=0.75] {$v_2$};
    \draw[pattern color=gray!70,pattern=north west lines]   (5.0,1.2) rectangle (8.0,2.2) node[pos=.5,scale=0.75] {$v_5$};
    \draw[pattern color=gray!70,pattern=north west lines]   (0.0,0.0) rectangle (4.0,1.0) node[pos=.5,scale=0.75] {$v_3$};
    \draw[pattern color=gray!70,pattern=north west lines]   (4.0,0.0) rectangle (7.0,1.0) node[pos=.5,scale=0.75] {$v_4$};
    \draw[pattern color=gray!70,pattern=north west lines]   (7.0,0.0) rectangle (8.0,1.0) node[pos=.5,scale=0.75] {$v_6$};

    \node at (8,0) {};
    \end{tikzpicture}
\end{subfigure}
\caption{Example of the trivial schedule with $M=2$.}
\label{fig:schedule_2}
\end{figure}
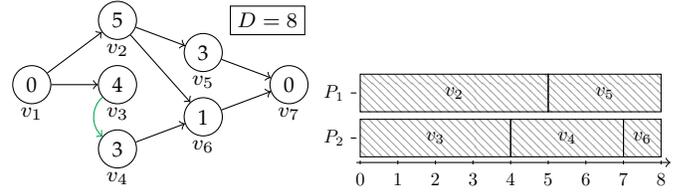

\section{Edge Generation Scheduling}
\label{sec:egs}
We propose the edge generation scheduling (EGS) framework based on trivial schedulability. Consider a common DAG scheduling problem: is a given DAG task $(\mathcal{G},D)$ schedulable on $M$ processors? For EGS, we reformulate the question: what is the minimum number of processors needed to schedule the DAG task $(\mathcal{G},D)$? Using trivial schedulability, the question forms the optimization problem:
\begin{alignat}{2}
    & \underset{\mathcal{G}^\prime \supseteq_E \mathcal{G}}{\text{minimize}} ~~~~~~~&&{\mathrm{W}(\mathcal{G}^\prime)} \\
    &\text{subject to} &&\mathrm{L}(\mathcal{G}^\prime) \leq D \nonumber
\end{alignat}
\textit{i.e.}, finding a graph $\mathcal{G}^\prime~ \supseteq_E ~\mathcal{G}$ that has minimal width while maintaining the length constraint. We aim to find this $\mathcal{G}^\prime$ by iteratively adding edges until no edges can be added without violating the length constraint or until a lower bound on the width is reached.

The rest of the section describes which edges can be added in Section \ref{sec:action_mask} and how to compute the lower bound in Section \ref{sec:lb}. Section \ref{sec:algorithm} shows the algorithm and complexity of EGS, and Section \ref{sec:example} shows an example to highlight the challenge in the edge selection choice.

\subsection{Eligible edges}\label{sec:action_mask}
We define edge masks as Boolean matrices, where each entry is active if the corresponding edge can be added to the current DAG and inactive otherwise. Four different edge masks are developed for different purposes: \textit{redundancy mask}, \textit{cycle mask}, \textit{length mask}, and \textit{width mask}. 

\begin{itemize}
    \item \textit{Redundancy mask} $\matr{M_r}$. The redundancy mask avoids adding redundant edges already existing in the current graph. The redundancy mask can be calculated as the reverse of the graph's transitive closure:
    \begin{equation} \label{eq:r_mask}
        \matr{M_r} = \lnot \matr{T_c}
    \end{equation}
    
    \item \textit{Cycle mask} $\matr{M_c}$. The cycle mask avoids introducing cycles when adding edges to the graph. It can be computed as
    \begin{equation}\label{eq:c_mask}
        \matr{M_c} = \lnot (({\matr{T_c}})^\intercal \lor \mathbf{I}),
    \end{equation}
    where $({\matr{T_c}})^\intercal$ is the transpose of the transitive closure and $\mathbf{I}$ denotes the Boolean identity matrix.
    
    \item \textit{Length mask} $\matr{M_l}$. The length mask avoids adding edges that would violate the length constraint. The length mask can be derived by comparing the EFT of the predecessor node $v_i$ and the LST of the successor node $v_j$ of the candidate edge $e_{ij}$:
    \begin{equation}\label{eq:l_mask}
        [\matr{M_l}]_{ij} = \mathbbm{1}_{\revise{t}^\text{EFT}_i \leq \revise{t}^\text{LST}_j} ,\quad \forall i,j=1,...,n 
    \end{equation}
    where $\mathbbm{1}_{\revise{t}^\text{EFT}_i \leq \revise{t}^\text{LST}_j}$ is an indicator function, returning $1$ if $\revise{t}^\text{EFT}_i \leq \revise{t}^\text{LST}_j$ and $0$ otherwise. 
    \revise{The correctness of the length mask is proved by Theorem~\ref{theorem:length_const}.
    }

    \item \textit{Width mask} $\matr{M_w}$. The width mask restricts edges to be generated between the nodes with the largest lateral width, which is a necessary condition to reduce DAG width as shown in Theorem~\ref{the:necessary_condition}. 
    \revise{
    \begin{equation}\label{eq:w_mask}
        [\matr{M_w}]_{ij} = \mathbbm{1}_{\mathbf{m}^\text{LW}_i = \mathbf{m}^\text{LW}_j = \mathrm{W}(\mathcal{G}) - 1} ,\quad \forall i,j=1,...,n,
    \end{equation}
    where $\mathbf{m}^\text{LW}_i$ denotes the lateral width of node $v_i$. 
    }
\end{itemize}

\revise{
\begin{theorem}[Length constraint]
\label{theorem:length_const}
    Given a DAG task $(\mathcal{G},D)$ with $\mathrm{L}(\mathcal{G}) \leq D$ and DAG $\mathcal{G}'=(\mathcal{V}, \mathcal{E} \cup \{e_{ij}\})$, with $v_j \notin des(v_i)$ (redundant) and $v_j \notin anc(v_i)\cup\{v_i\}$ (cycle), $\mathrm{L}(\mathcal{G}') \leq D$ if and only if $t^\text{EFT}_i \leq t^\text{LST}_j$ in task $(\mathcal{G},D)$. 
\end{theorem}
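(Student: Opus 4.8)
The plan is to reduce the statement to a single clean identity: adding one non-redundant, non-cyclic edge $e_{ij}$ changes the DAG length by exactly $\mathrm{L}(\mathcal{G}') = \max\{\mathrm{L}(\mathcal{G}),\; t^\text{EFT}_i + D - t^\text{LST}_j\}$. Once this is established, the equivalence is immediate: since $\mathrm{L}(\mathcal{G})\le D$ is assumed, $\mathrm{L}(\mathcal{G}')\le D$ holds iff $t^\text{EFT}_i + D - t^\text{LST}_j \le D$, i.e.\ iff $t^\text{EFT}_i \le t^\text{LST}_j$, which covers both directions of the ``if and only if''.

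First I would record two facts about the node-level timing attributes of $(\mathcal{G},D)$, both read off from the fixed-point system~\eqref{eq:node_level_timing}. Fact (i): $t^\text{EFT}_i$ equals the length of the longest path in $\mathcal{G}$ from the source node to $v_i$, counting the WCETs of both endpoints; this follows by unrolling $\mathbf{t}^\text{EST}=\matr{A}\otimes\mathbf{t}^\text{EFT}$ together with $\mathbf{t}^\text{EFT}=\mathbf{t}^\text{EST}+\mathbf{C}$ starting from the source, whose EST is $0$. Fact (ii): $D - t^\text{LST}_j$ equals the length of the longest path in $\mathcal{G}$ from $v_j$ to the sink node, again counting both endpoints; this follows symmetrically by unrolling the LST/LFT equations starting from the sink, whose LFT is $D$. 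Both are tight, i.e.\ achieved by the corresponding critical sub-paths, which is what I will need for the lower-bound direction.

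Next I would note that $\mathcal{G}'$ is still a DAG: a cycle through $e_{ij}$ would require a path from $v_j$ to $v_i$ in $\mathcal{G}$, i.e.\ $v_j\in anc(v_i)$, and also $v_j\ne v_i$; both are excluded by hypothesis. Then comes the core structural step, classifying the complete paths of $\mathcal{G}'$. A complete path not using $e_{ij}$ is a path of $\mathcal{G}$, hence has length at most $\mathrm{L}(\mathcal{G})$. A path using $e_{ij}$ uses it exactly once by acyclicity, so it decomposes as a path $P_1$ of $\mathcal{G}$ from the source to $v_i$, followed by $e_{ij}$ (zero weight), followed by a path $P_2$ of $\mathcal{G}$ from $v_j$ to the sink; crucially, the source-to-$v_i$ portion cannot itself traverse $e_{ij}$ (that would force a cycle), so it genuinely lives in $\mathcal{G}$. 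Its length is therefore at most (longest source-to-$v_i$ path in $\mathcal{G}$) $+$ (longest $v_j$-to-sink path in $\mathcal{G}$) $= t^\text{EFT}_i + (D - t^\text{LST}_j)$ by Facts (i)--(ii), and this bound is attained by concatenating the two critical sub-paths through $e_{ij}$. Taking the maximum over all complete paths (Definition~\ref{def:length}) yields the displayed identity, and the equivalence follows as above.

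The step I expect to be the main obstacle is the bookkeeping in the path decomposition: making precise that the source-to-$v_i$ segment of any $e_{ij}$-using path of $\mathcal{G}'$ lies entirely in $\mathcal{G}$, and that the endpoint WCETs $C_i$ and $C_j$ are each counted exactly once and in the correct segment, so that the two sub-path lengths add up to precisely $t^\text{EFT}_i + (D - t^\text{LST}_j)$ with no double counting or omission. Everything else is a routine consequence of the longest-path characterization of DAG length and the fixed-point definitions in~\eqref{eq:node_level_timing}.
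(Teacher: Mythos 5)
Your proposal is correct and follows essentially the same route as the paper's proof: both rest on identifying $t^\text{EFT}_i$ and $D - t^\text{LST}_j$ with longest source-to-$v_i$ and $v_j$-to-sink path lengths, observing these are unchanged by adding $e_{ij}$, and noting that the concatenated path through $e_{ij}$ is the longest path using the new edge while all other complete paths already exist in $\mathcal{G}$. Your explicit identity $\mathrm{L}(\mathcal{G}') = \max\{\mathrm{L}(\mathcal{G}),\, t^\text{EFT}_i + D - t^\text{LST}_j\}$ is just a compact repackaging of the paper's two-case analysis for sufficiency together with its lower bound for necessity.
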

\begin{proof}   
    Adding an edge $e_{ij}$ to $\mathcal{G}$ to create $\mathcal{G}'$ does not change $t_i^\text{EFT}$ and $t_j^\text{LST}$ as it does not alter the ancestors and descendants of $v_i$ and $v_j$, respectively.
    By definition, $t^\text{EFT}_i$ equals the length of the longest path between the source node and $v_i$ (denoted by $p^{l}_i, \mathrm{L}(p^{l}_i)=t^\text{EFT}_i$). 
    Conversely, $t^\text{LST}_j$ equals the difference of $D$ minus the length of the longest path between $v_j$ and the sink node (denoted by $p_j^{r}, \mathrm{L}(p^{r}_j)=D-t^\text{LST}_j$).
    Since $e_{ij}$ connects $v_i$ and $v_j$ in $\mathcal{G}'$, there exists a path $\hat{p}_{ij} = p_i^l \cup p_j^r$ with $\mathrm{L}(\hat{p}_{ij}) = t^\text{EFT}_i + D - t^\text{LST}_j$, which is the longest path among all the paths that go through $v_i$ and $v_j$. Therefore,
    $$\mathrm{L}(\mathcal{G}') \geq \mathrm{L}(\hat{p}_{ij}) = t^\text{EFT}_i + D - t^\text{LST}_j.$$
    The inequality is tight when $\hat{p}_{ij}$ is a critical path of $\mathcal{G}'$.
    
    \textbf{Sufficiency.}
    Denote $\mathcal{P}$ as set of paths of $\mathcal{G}$ and $\mathcal{P}^{*\prime}$ as the set of critical paths of $\mathcal{G}'$.
    There are two cases to consider:
    \begin{itemize}        
        \item $\hat{p}_{ij} \in \mathcal{P}^{*\prime}$. The above inequality is tight, \ie, $\mathrm{L}(\mathcal{G}') = t^\text{EFT}_i + D - t^\text{LST}_j$. It follows $t^\text{EFT}_i \leq t^\text{LST}_j \implies \mathrm{L}(\mathcal{G}') \leq D$.
        \item $\hat{p}_{ij} \notin \mathcal{P}^{*\prime}$. Since $\hat{p}_{ij}$ is the longest path among all the paths that go through $v_i$ and $v_j$, it follows that either $v_i \notin p^{*\prime}$ or $v_j \notin p^{*\prime}, \forall p^{*\prime} \in \mathcal{P}^{*\prime}$. Therefore, $\mathcal{P}^{*\prime} \subseteq \mathcal{P}$,
        $\mathrm{L}(\mathcal{G}') = \mathrm{L}(\mathcal{G}) \leq D$.
    \end{itemize}
    
    \textbf{Necessity.}
    $\mathrm{L}(\mathcal{G}') \leq D \land \mathrm{L}(\mathcal{G}') \geq t^\text{EFT}_i + D - t^\text{LST}_j \implies t^\text{EFT}_i + D - t^\text{LST}_j \leq D \implies t^\text{EFT}_i \leq t^\text{LST}_j$.
\end{proof}
}

\begin{theorem}[Width reduction]
\label{the:necessary_condition}
For any DAG $\mathcal{G}' \supseteq_E \mathcal{G}$, $\mathrm{W}(\mathcal{G}') < \mathrm{W}(\mathcal{G})$ only if the transitive closure of $\mathcal{G}'$ has at least one edge between the nodes with the largest lateral width in $\mathcal{G}$.
\end{theorem}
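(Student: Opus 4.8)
The statement is a necessary condition (``only if''), so the natural route is the contrapositive: assuming that the transitive closure of $\mathcal{G}'$ contains \emph{no} edge joining two nodes that both attain the largest lateral width in $\mathcal{G}$, I will show $\mathrm{W}(\mathcal{G}') = \mathrm{W}(\mathcal{G})$, which rules out $\mathrm{W}(\mathcal{G}') < \mathrm{W}(\mathcal{G})$.

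First I would identify the set of ``nodes with the largest lateral width'' with the set of nodes lying in some maximum-size antichain of $\mathcal{G}$. Let $w = \mathrm{W}(\mathcal{G})$ and $S = \{v_i \in \mathcal{V} : m^\text{LW}_i = w - 1\}$. For any node $v_i$, the $m^\text{LW}_i$ nodes witnessing its lateral width are pairwise non-reachable and incomparable with $v_i$, so together with $v_i$ they form an antichain of $\mathcal{G}$ of size $m^\text{LW}_i + 1 \le w$; hence $m^\text{LW}_i \le w-1$ for every node, so $w-1$ is the maximal attainable lateral width. Conversely, if $v_i$ belongs to a maximum antichain $Q$, then $Q \setminus \{v_i\}$ is an antichain of $\mathcal{G} \setminus (anc(v_i) \cup des(v_i) \cup \{v_i\})$ of size $w-1$, forcing $m^\text{LW}_i \ge w-1$ and thus $m^\text{LW}_i = w-1$; and if $v_i$ is in no maximum antichain, the largest antichain through $v_i$ has size at most $w-1$, giving $m^\text{LW}_i \le w-2$. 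Therefore $S$ is exactly the set of ``antichain-critical'' nodes.

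Next I would record two monotonicity facts. Any maximum antichain $Q$ of $\mathcal{G}$ is contained in $S$ (every node of $Q$ lies in the maximum antichain $Q$), so $|S| \ge w$. And since $\mathcal{G}' \supseteq_E \mathcal{G}$ only enlarges the reachability relation, every antichain of $\mathcal{G}'$ is also an antichain of $\mathcal{G}$, whence $\mathrm{W}(\mathcal{G}') \le w$ unconditionally. Now apply the contrapositive hypothesis: if no edge of the transitive closure of $\mathcal{G}'$ joins two nodes of $S$, then all nodes of $S$ are pairwise non-reachable in $\mathcal{G}'$, i.e.\ $S$ is itself an antichain of $\mathcal{G}'$. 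Hence $\mathrm{W}(\mathcal{G}') \ge |S| \ge w$, and combined with $\mathrm{W}(\mathcal{G}') \le w$ this gives $\mathrm{W}(\mathcal{G}') = w = \mathrm{W}(\mathcal{G})$. This contradicts $\mathrm{W}(\mathcal{G}') < \mathrm{W}(\mathcal{G})$, so some transitive-closure edge of $\mathcal{G}'$ must connect two nodes of $S$.

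The only delicate step is the characterization in the second paragraph, namely that $m^\text{LW}_i = w-1$ holds exactly for nodes in a maximum antichain; this hinges on the fact that deleting $anc(v_i) \cup des(v_i) \cup \{v_i\}$ and measuring the width there precisely counts antichains that can be extended by $v_i$, so the bound $m^\text{LW}_i \le w-1$ is tight only on critical nodes. Everything afterwards is elementary antichain bookkeeping (Dilworth-flavored), and the two monotonicity facts about $|S|$ and about $\mathrm{W}(\mathcal{G}') \le \mathrm{W}(\mathcal{G})$ follow directly from the definitions of width and of $\supseteq_E$.
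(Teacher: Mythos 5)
Your proof is correct and follows essentially the same route as the paper's: both argue by contradiction/contrapositive that if the transitive closure of $\mathcal{G}'$ has no edge among the largest-lateral-width nodes, then a maximum antichain of $\mathcal{G}$ survives as an antichain of $\mathcal{G}'$, forcing $\mathrm{W}(\mathcal{G}') \geq \mathrm{W}(\mathcal{G})$. The only substantive difference is that you explicitly establish the fact the paper leaves implicit --- that every node of a critical antichain attains lateral width $\mathrm{W}(\mathcal{G})-1$ and hence belongs to $S$ --- which is precisely the step needed to justify the paper's assertion that ``no edge between largest-LW nodes'' implies ``no edge within the critical antichain.''
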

\begin{proof}
We prove the theorem by contradiction. Suppose there exists a DAG $\mathcal{G}' \supseteq_E \mathcal{G}$ whose width is lower than $\mathcal{G}$ and has no edge between the nodes with the largest LW in $\mathcal{G}$. We denote the critical antichain of DAG $\mathcal{G}$ as $q$ (\textit{i.e.}, $|q|=\mathrm{W}(\mathcal{G})$). Since $\mathcal{G}'$ has no edge between the nodes with the largest LW, there is no edge between the nodes belonging to the critical antichain $q$. Thus, $q$ is an antichain of $\mathcal{G}'$. By the definition of the DAG width, we know that the width of $\mathcal{G}'$ is equal to or larger than $|q|=\mathrm{W}(\mathcal{G})$, which contradicts our assumption.
\end{proof}

Finally, the complete edge mask is obtained by combining all the above edge masks:
\begin{equation}\label{eq:action_mask_full}
    \matr{M} = \matr{M_r} \land \matr{M_c} \land \matr{M_l} \land \matr{M_w}
\end{equation}
\remove{
Furthermore, we observe that edges that are redundant are edges to descendants; edges that add cycles are edges to ancestors. Since the width mask only allows edges within the antichain, it excludes dependents and ancestors and thus can replace these masks (\textit{i.e.}, $\matr{M_w} \land \matr{M_r} \land \matr{M_c} = \matr{M_w}$). Therefore the complete mask can be reduced to }
\removeeq{11}{\matr{M} = \matr{M_l} \land \matr{M_w}}
\remove{
For ease of understanding, Fig. \ref{fig:eg_act_mask} shows one example for each type of edge masks in Equation~\eqref{eq:action_mask_full}.
}
\begin{example}
The valid and invalid edges of the DAG task in Example \ref{eg:dag} are illustrated in Fig.~\ref{fig:eg_act_mask}.
\end{example}

\usetikzlibrary{automata, chains, quotes}

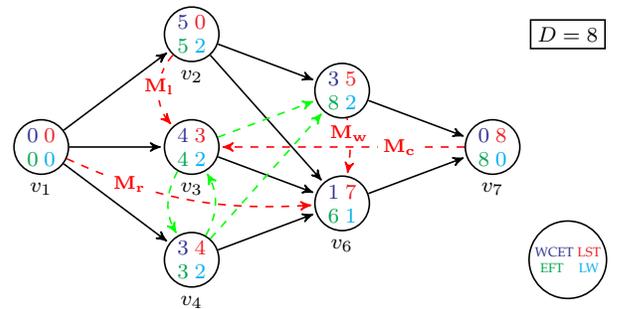
\begin{figure}[ht]
\centering
\begin{tikzpicture}
    [
        ->,>=stealth',shorten >=1pt,auto,node distance=2.8cm,semithick,node font=\footnotesize
    ]
    \tikzset
    {
        in place/.style=
        {
          auto=false,
          fill=white,
          inner sep=2pt,
        },
    }
    \node (1) [circle, draw, align=center, label=below:$v_1$, inner sep=1pt] at (0.0,1.5) 
    {
        $\textcolor{\cwcet}{0}$ $\textcolor{\clst}{0}$ \\ $\textcolor{\ceft}{0}$ $\textcolor{\clw}{0}$
    };
    \node (2) [circle, draw, align=center, label=below:$v_2$, inner sep=1pt] at (2.0,3.0) 
    {
        $\textcolor{\cwcet}{5}$ $\textcolor{\clst}{0}$ \\ $\textcolor{\ceft}{5}$ $\textcolor{\clw}{2}$
    };
    \node (3) [circle, draw, align=center, label=below:$v_3$, inner sep=1pt]  at (2.0,1.5)
    {
        $\textcolor{\cwcet}{4}$ $\textcolor{\clst}{3}$ \\ $\textcolor{\ceft}{4}$ $\textcolor{\clw}{2}$
    };
    \node (4) [circle, draw, align=center, label=below:$v_4$, inner sep=1pt]  at (2.0,0.0)
    {
        $\textcolor{\cwcet}{3}$ $\textcolor{\clst}{4}$ \\ $\textcolor{\ceft}{3}$ $\textcolor{\clw}{2}$
    };
    \node (5) [circle, draw, align=center, label=below:$v_5$, inner sep=1pt]  at (4.0,2.25)
    {
        $\textcolor{\cwcet}{3}$ $\textcolor{\clst}{5}$ \\ $\textcolor{\ceft}{8}$ $\textcolor{\clw}{2}$
    };
    \node (6) [circle, draw, align=center, label=below:$v_6$, inner sep=1pt]  at (4.0,0.75)
    {
        $\textcolor{\cwcet}{1}$ $\textcolor{\clst}{7}$ \\ $\textcolor{\ceft}{6}$ $\textcolor{\clw}{1}$
    };
    \node (7) [circle, draw, align=center, label=below:$v_7$, inner sep=1pt]  at (6.0,1.5)
    {
        $\textcolor{\cwcet}{0}$ $\textcolor{\clst}{8}$ \\ $\textcolor{\ceft}{8}$ $\textcolor{\clw}{0}$
    };

    \node (legend) [circle, draw, align=center, label=below:, inner sep=1pt, font=\tiny]  at (7.0,0.0)
    {
        \textcolor{\cwcet}{WCET} \textcolor{\clst}{LST} \\ \textcolor{\ceft}{~EFT} \textcolor{\clw}{~~~LW}
    };


    \path [->, red, dashed] (1) edge [bend right=14] node[pos=0.25, in place, node font=\scriptsize] {$\matr{M_r}$} (6);
    \path [->, red, dashed] (5) edge [bend left=10] node[pos=0.25, in place, node font=\scriptsize] {$\matr{M_w}$} (6);
    \path [->, red, dashed] (2) edge [bend right=42] node[pos=0.45, in place, node font=\scriptsize] {$\matr{M_l}$} (3);
    \path [->, red, dashed] (7) edge [bend right=0] node[pos=0.25, in place, node font=\scriptsize] {$\matr{M_c}$} (3);
    
    \path [->] (1) edge (2);
    \path [->] (1) edge (3);
    \path [->] (1) edge (4);
    \path [->] (2) edge (5);
    \path [->] (2) edge (6);
    \path [->] (3) edge (6);
    \path [->] (4) edge (6);
    \path [->] (5) edge (7);
    \path [->] (6) edge (7);

    \path [->, green, dashed] (4) edge [bend right] (3);
    \path [->, green, dashed] (3) edge [bend right] (4);
    \path [->, green, dashed] (3) edge (5);
    \path [->, green, dashed] (4) edge (5);

    \node[draw] at (7.0, 3.0) {$D=8$};
    
\end{tikzpicture}
\caption{Example of the valid and invalid edges, shown with the green and red dashed edges, respectively. For ease of presentation, we only show one invalid action masked out by each type of action mask
($\matr{M_r}$, $\matr{M_c}$, $\matr{M_l}$ or $\matr{M_w}$). 
The numbers inside each node represent the node's WCET, LST, EFT, and LW with corresponding colors.}
\label{fig:eg_act_mask}
\end{figure}%

\subsection{Lower bound of task parallelism} \label{sec:lb}
\begin{theorem}[Lower bound] \label{theorem:lb}
\revise{Given a DAG task $(\mathcal{G},D)$, a set of sub-tasks $\widehat{\mathset{V}} \subseteq \mathset{V}$ are not schedulable on fewer than $\mathrm{LB}(\widehat{\mathset{V}})$ processors:}
\begin{equation} \label{eq:sub_lb}
    \mathrm{LB}(\widehat{\mathset{V}}) = \left \lceil \frac{\sum_{v_i \in \widehat{\mathset{V}}}{C_i}}{\max\limits_{v_i \in \widehat{\mathset{V}}}{\revise{t}^\text{LFT}_i} - \min\limits_{v_j \in \widehat{\mathset{V}}}{\revise{t}^\text{EST}_j}} \right \rceil
\end{equation}
\end{theorem}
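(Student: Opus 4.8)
The plan is to prove the bound by a processor-area (volume) argument restricted to the time interval in which every node of $\widehat{\mathset{V}}$ is forced to run. Set $t_{\min} = \min_{v_j \in \widehat{\mathset{V}}} t^\text{EST}_j$ and $t_{\max} = \max_{v_i \in \widehat{\mathset{V}}} t^\text{LFT}_i$, so that $\mathrm{LB}(\widehat{\mathset{V}}) = \lceil (\sum_{v_i\in\widehat{\mathset{V}}}C_i)/(t_{\max}-t_{\min})\rceil$. First I would establish the key claim: in \emph{any} feasible schedule of $(\mathcal{G},D)$, every sub-job $v_i \in \widehat{\mathset{V}}$ executes entirely inside the window $[t_{\min}, t_{\max}]$. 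Granting this, the jobs of $\widehat{\mathset{V}}$ consume a total processor-area of exactly $\sum_{v_i \in \widehat{\mathset{V}}} C_i$ (each occupies a single processor for $C_i$ time and jobs on the same processor do not overlap), while a window of length $t_{\max}-t_{\min}$ on $m$ processors offers at most $m\,(t_{\max}-t_{\min})$ units of area; hence $m\,(t_{\max}-t_{\min}) \ge \sum_{v_i\in\widehat{\mathset{V}}}C_i$, and since $m$ is a positive integer, $m \ge \lceil (\sum_{v_i\in\widehat{\mathset{V}}}C_i)/(t_{\max}-t_{\min})\rceil = \mathrm{LB}(\widehat{\mathset{V}})$. (When $\sum_{v_i\in\widehat{\mathset{V}}}C_i = 0$ the statement is vacuous; otherwise the denominator is strictly positive, since some $v_i\in\widehat{\mathset{V}}$ has $C_i>0$, and then $t_{\max}\ge t^\text{LFT}_i \ge t^\text{EST}_i+C_i > t_{\min}$.)

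It then remains to prove the key claim, for which I would reuse the longest-path characterization of the timing attributes already used in the proof of Theorem~\ref{theorem:length_const}. Resolving the fixed point of \eqref{eq:node_level_timing} by induction along a topological order shows that $t^\text{EST}_i$ equals the length of the longest path from the source to $v_i$ excluding $C_i$, and that $D - t^\text{LFT}_i + C_i$ equals the length of the longest path from $v_i$ to the sink. In any feasible schedule, $v_i$ cannot begin before all the nodes on its longest incoming path have completed, so its start time is at least $t^\text{EST}_i \ge t_{\min}$; symmetrically, for the sink to meet its deadline $D$, all the nodes on the longest path from $v_i$ to the sink must still execute after $v_i$ finishes, forcing the completion time of $v_i$ to be at most $t^\text{LFT}_i \le t_{\max}$. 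Hence $v_i$ runs within $[t_{\min},t_{\max}]$. A short remark covers the case where the schedule is realized on some $\mathcal{G}' \supseteq_E \mathcal{G}$ (as produced by EGS): adding edges can only lengthen ancestor/descendant paths, hence only enlarge each $t^\text{EST}_i$ and shrink each $t^\text{LFT}_i$, so the window computed on $\mathcal{G}$ still contains the execution interval of every $v_i$, and the bound is unaffected.

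I expect the counting step to be immediate; the one point requiring care is the rigorous justification that $t^\text{EST}$ and $t^\text{LFT}$, computed on the original graph through the fixed-point recursion, are genuine lower/upper bounds on the start/finish times of every job in every feasible schedule — essentially the topological-order induction sketched above, mirroring the reasoning of Theorem~\ref{theorem:length_const}. Note that non-preemptiveness plays no role in the argument: the area bound only uses that each job consumes $C_i$ units of some single processor's time within the window, so the conclusion holds a fortiori for non-preemptive schedules.
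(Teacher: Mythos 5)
Your proposal is correct and is essentially the paper's own argument: the paper proves the same processor-area bound by contradiction (total work $\sum_{v_i\in\widehat{\mathcal{V}}}C_i$ cannot fit into a window of length $\max t^\text{LFT}_i-\min t^\text{EST}_j$ on $\mathrm{LB}(\widehat{\mathcal{V}})-1$ processors), whereas you state it directly. Your explicit justification that every job of $\widehat{\mathcal{V}}$ must execute inside $[t_{\min},t_{\max}]$ is a step the paper leaves implicit, so your write-up is, if anything, more complete.
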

\begin{proof}
We prove the theorem by contradiction. Assume all the sub-tasks in $\widehat{\mathset{V}}$ are schedulable on $M' \leq \mathrm{LB}(\widehat{\mathset{V}}) - 1$ processors. Then, the maximum response time among all the sub-tasks in $\widehat{\mathset{V}}$ is at least $\sum_{v_i \in \widehat{\mathset{V}}}{C_i} / M' \geq \sum_{v_i \in \widehat{\mathset{V}}}{C_i} / 
(\mathrm{LB}(\widehat{\mathset{V}}) - 1)
> \max\limits_{v_i \in \widehat{\mathset{V}}}{\revise{t}^\text{LFT}_i} - \min\limits_{v_j \in \widehat{\mathset{V}}}{\revise{t}^\text{EST}_j}$. This implies that some sub-task cannot finish its execution within the required finishing time, which contradicts the assumption that all the sub-tasks in $\widehat{\mathset{V}}$ are schedulable.
\end{proof}

As $\mathrm{LB}(\widehat{\mathset{V}})$ gives the lower bound of the number of processors required by the subset of task nodes $\widehat{\mathset{V}}$, it is also a valid lower bound of the DAG task's parallelism $\underline{M}$.
By computing the $\mathrm{LB}(\widehat{\mathset{V}})$ of each node subset $\widehat{\mathset{V}}$, we could derive a tighter $\underline{M}$ by taking the maximum $\mathrm{LB}(\widehat{\mathset{\mathset{V}}})$ among all possible node subsets (\textit{i.e.}, $\underline{M} = \max_{\widehat{V}}{\mathrm{LB}(\widehat{\mathset{V}})}$). 
However, since the total number of node subsets is exponential with respect to the number of nodes in the DAG, deriving the lower bound of every possible node subset is not computationally tractable.
In this paper, we consider two special cases of $\widehat{\mathset{V}}$: (i) all the nodes in the DAG (\textit{i.e.}, $\widehat{\mathset{V}} = \mathset{V}$), (ii) the nodes with the largest lateral width (\textit{i.e.}, $\widehat{\mathset{V}} = \{v_i \in \mathset{V} | m^\text{LW}_i = \mathrm{W}(\mathcal{G})-1\}$), and take the larger $\mathrm{LB}(\widehat{\mathset{V}})$ as the lower bound of the task parallelism $\underline{M}$, as shown in Corollary~\ref{cor:lb}.

\begin{corollary} \label{cor:lb}
\revise{
Given a DAG task $(\mathcal{G},D)$, the task is not schedulable on fewer than $\underline{M}$ processors:}
\begin{equation} \label{eq:lb}
    \underline{M} = \max\{\mathrm{LB}(\mathset{V}), \mathrm{LB}(\{v_i \in \mathset{V} | m^\text{LW}_i = \revise{\mathrm{W}(\mathcal{G})}-1\})\}
\end{equation}
\end{corollary}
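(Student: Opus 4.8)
The plan is to derive Corollary~\ref{cor:lb} directly from Theorem~\ref{theorem:lb} by specializing it to two concrete node subsets, together with the observation that schedulability of the whole task $(\mathcal{G},D)$ is inherited by every subset of its nodes. First I would make the inheritance precise: in any feasible schedule of $(\mathcal{G},D)$ on $M'$ processors, each node $v_i$ must start no earlier than $t^\text{EST}_i$ and finish no later than $t^\text{LFT}_i$, because $t^\text{EST}_i$ is the total WCET of a chain of ancestors that must precede $v_i$ and $D - t^\text{LFT}_i$ is the total WCET of a chain of descendants that must follow $v_i$; hence these windows are forced by the precedence structure, independently of the dispatching policy. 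Therefore, restricting such a schedule to the nodes of any $\widehat{\mathset{V}} \subseteq \mathset{V}$ keeps every $v_i \in \widehat{\mathset{V}}$ within its $[t^\text{EST}_i, t^\text{LFT}_i]$ window on $M'$ processors, and Theorem~\ref{theorem:lb} then forces $M' \ge \mathrm{LB}(\widehat{\mathset{V}})$.

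I would then instantiate this with the two subsets from the statement: $\widehat{\mathset{V}} = \mathset{V}$, giving $M' \ge \mathrm{LB}(\mathset{V})$, and $\widehat{\mathset{V}}^{\text{LW}} = \{v_i \in \mathset{V} \mid m^\text{LW}_i = \mathrm{W}(\mathcal{G}) - 1\}$, giving $M' \ge \mathrm{LB}(\widehat{\mathset{V}}^{\text{LW}})$. Since both inequalities hold for every feasible $M'$, we obtain $M' \ge \max\{\mathrm{LB}(\mathset{V}), \mathrm{LB}(\widehat{\mathset{V}}^{\text{LW}})\} = \underline{M}$, i.e., $(\mathcal{G},D)$ is not schedulable on fewer than $\underline{M}$ processors. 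One may equally phrase the argument by contradiction, mirroring the proof of Theorem~\ref{theorem:lb}: a feasible schedule on $M' \le \underline{M}-1$ processors would violate one of the two $\mathrm{LB}$ bounds.

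There is no genuine obstacle here --- the statement is an immediate corollary --- and the only points requiring care are the two remarks just made. First, one should check that $\widehat{\mathset{V}}^{\text{LW}}$ is well defined, i.e., non-empty: it contains every critical antichain $q$ of $\mathcal{G}$, since for $v_i \in q$ the other $\mathrm{W}(\mathcal{G})-1$ members of $q$ are neither ancestors nor descendants of $v_i$ and still form an antichain after deleting $v_i$ and its ancestors and descendants (so $m^\text{LW}_i \ge \mathrm{W}(\mathcal{G})-1$), whereas $m^\text{LW}_i \ge \mathrm{W}(\mathcal{G})$ would extend to an antichain of size $\mathrm{W}(\mathcal{G})+1$ in $\mathcal{G}$, a contradiction. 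Second, one should state explicitly why restricting the maximum to just these two subsets --- rather than to all exponentially many subsets, which Theorem~\ref{theorem:lb} would also allow but is intractable --- is sound: each $\mathrm{LB}(\widehat{\mathset{V}})$ is individually a valid lower bound on the processor count, so the maximum over any chosen collection of subsets is still a valid lower bound, here yielding the stated $\underline{M}$.
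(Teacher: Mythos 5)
Your proposal is correct and follows essentially the same route as the paper, which justifies the corollary by noting that $\mathrm{LB}(\widehat{\mathset{V}})$ from Theorem~\ref{theorem:lb} is a valid lower bound on processor usage for \emph{any} subset $\widehat{\mathset{V}} \subseteq \mathset{V}$, and then simply instantiating the two subsets $\mathset{V}$ and $\{v_i \mid m^\text{LW}_i = \mathrm{W}(\mathcal{G})-1\}$ and taking the maximum. Your additional checks --- that the EST/LFT windows are forced on every node in any feasible schedule, and that the largest-lateral-width subset is non-empty --- are sound elaborations of details the paper leaves implicit.
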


\subsection{EGS framework}\label{sec:algorithm}

Algorithm \ref{alg:egs} shows specific procedures of the EGS framework. In line 1, a DAG $\mathcal{G}'$ is initialized as the input graph $\mathcal{G}$. The width of the DAG and its lower bound are computed accordingly. In line 2, the edge mask of DAG $\mathcal{G}'$ is initialized using Equations \eqref{eq:r_mask}-\eqref{eq:action_mask_full}. In lines 3-6, edges are added iteratively to DAG $\mathcal{G}'$ until (i) no edges can be added according to the edge mask $\matr{M}$ or (ii) the current DAG width reaches its lower bound. In each iteration, an edge is selected according to an edge generation policy $\pi$ (line 4). The policy takes the current DAG as input and outputs an edge $e_{ij}$ to be added in line 5. The generated edge must comply with the edge mask such that $[\matr{M}]_{ij} = 1$. 
At the end of each iteration,
The DAG width and its lower bound are updated in line 6, and 
the edge mask is updated in line 7. 

{
\begin{algorithm} [htb]
    \caption{The EGS framework}
    \label{alg:egs}
    
    \KwIn{($\mathcal{G},D)$: the DAG task to be scheduled\;}
    \KwOut{$\mathcal{G}'$: the DAG with minimized width\;}
    $\mathcal{G}' \leftarrow \mathcal{G}$, $\underline{M} \leftarrow \mathrm{LB}(\mathcal{G}')$\;
    Compute edge mask $\matr{M}$ according to Equation (\ref{eq:action_mask_full})\;
    \While{$\lor_{i=1}^n \lor_{j=1}^n [\matr{M}]_{ij}$ \And $\underline{M} < \mathrm{W}(\mathcal{G})$}
    {
        $e_{ij} \gets $ Select with policy $\pi$ and edge mask $\matr{M}$\;
        $\mathcal{G}' \leftarrow \mathcal{G}' \cup \{e_{ij}\}$\;
        $\underline{M} \leftarrow \mathrm{LB}(\mathcal{G}')$\;
        Update edge mask $\matr{M}$\;
    }
    \Return $\mathcal{G}'$\;
\end{algorithm}
}

    

The time complexity of the proposed EGS framework is analyzed as follows. In each iteration of EGS, the time complexity depends on the complexity of updating the edge masks and selecting the edge to be added to the graph. Recall that we consider four edge masks. Updating each of them requires the update of the transitive closure, which takes a complexity of $
\mathcal{O}(n^2)$. Once we get the updated transitive closure, the redundancy mask and the cycle mask can be updated within $
\mathcal{O}(n^2)$. The update of the length and width mask requires recomputing each node's timing and parallelism attributes from scratch, which takes a time complexity of $\mathcal{O}(n^3)$ and $\mathcal{O}(n \sqrt{n} m^*)$, respectively (details see Section \ref{sec:algebra}). Thus, the overall complexity of updating the edge masks is bounded by $\mathcal{O}(n^3 + n\sqrt{n} \cdot m^*)$, where $m^*$ denotes the number of edges in the transitive closure. Since in the worst case at most 
$n^2$
edges can be added to the original graph, the time complexity of the EGS algorithm is bounded by $\mathcal{O}(n^2 \cdot (n^3 + n\sqrt{n} \cdot m^* + \Omega_\pi))$, where $\Omega_\pi$ is the time complexity of the edge generation policy $\pi$.

\subsection{Example of edge generation policy}\label{sec:example}

So far, we have not discussed specific edge generation policies that can be used within the proposed EGS framework.
Here, we use an example to illustrate two different edge generation policies, which result in two different schedules with different processor usage. This example indicates that the edge generation policy is critical to the scheduling performance of EGS and motivates us to develop a deep reinforcement learning algorithm to learn an efficient edge generation policy in Section~\ref{sec:drl}.

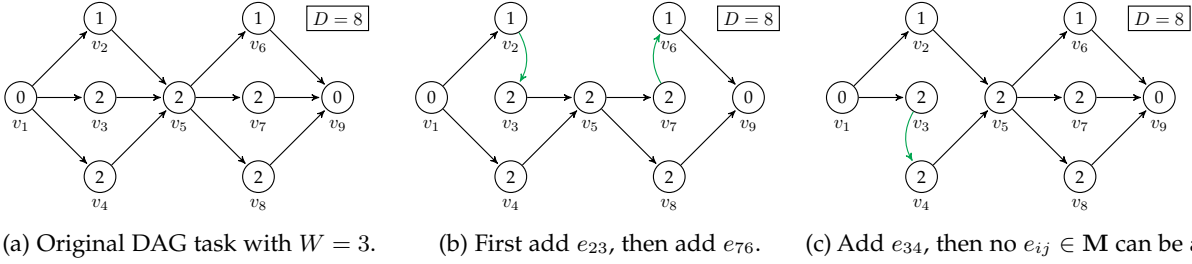
\begin{figure*}[t]
\centering
\begin{subfigure}{.3\textwidth}
\centering
\resizebox{0.9\textwidth}{!}
{%
\begin{tikzpicture}
    [
        ->,>=stealth',shorten >=1pt,auto,node distance=2.8cm,semithick
    ]
    \node (1) [circle, draw, label=below:$v_1$]  at (0.0,1.5) {0};
    \node (2) [circle, draw, label=below:$v_2$]  at (1.5,3.0) {1};
    \node (3) [circle, draw, label=below:$v_3$]  at (1.5,1.5) {2};
    \node (4) [circle, draw, label=below:$v_4$]  at (1.5,0.0) {2};
    \node (5) [circle, draw, label=below:$v_5$]  at (3.0,1.5) {2};
    \node (6) [circle, draw, label=below:$v_6$]  at (4.5,3.0) {1};
    \node (7) [circle, draw, label=below:$v_7$]  at (4.5,1.5) {2};
    \node (8) [circle, draw, label=below:$v_8$]  at (4.5,0.0) {2};
    \node (9) [circle, draw, label=below:$v_9$]  at (6.0,1.5) {0};
    \path [->] (1) edge (2);
    \path [->] (1) edge (3);
    \path [->] (1) edge (4);
    \path [->] (2) edge (5);
    \path [->] (3) edge (5);
    \path [->] (4) edge (5);
    \path [->] (5) edge (6);
    \path [->] (5) edge (7);
    \path [->] (5) edge (8);
    \path [->] (6) edge (9);
    \path [->] (7) edge (9);
    \path [->] (8) edge (9);

    \node[draw] at (6.0, 3.0) {$D=8$};
\end{tikzpicture}
}
\caption{Original DAG task with $W=3$.}
\label{fig:egs_0}
\end{subfigure}%
\begin{subfigure}{.3\textwidth}
\centering
\resizebox{0.9\textwidth}{!}
{%
\begin{tikzpicture}
    [
        ->,>=stealth',shorten >=1pt,auto,node distance=2.8cm,semithick
    ]
    \node (1) [circle, draw, label=below:$v_1$]  at (0.0,1.5) {0};
    \node (2) [circle, draw, label=below:$v_2$]  at (1.5,3.0) {1};
    \node (3) [circle, draw, label=below:$v_3$]  at (1.5,1.5) {2};
    \node (4) [circle, draw, label=below:$v_4$]  at (1.5,0.0) {2};
    \node (5) [circle, draw, label=below:$v_5$]  at (3.0,1.5) {2};
    \node (6) [circle, draw, label=below:$v_6$]  at (4.5,3.0) {1};
    \node (7) [circle, draw, label=below:$v_7$]  at (4.5,1.5) {2};
    \node (8) [circle, draw, label=below:$v_8$]  at (4.5,0.0) {2};
    \node (9) [circle, draw, label=below:$v_9$]  at (6.0,1.5) {0};
    \path [->] (1) edge (2);
    \path [->] (1) edge (4);
    \path [->] (3) edge (5);
    \path [->] (4) edge (5);
    \path [->] (5) edge (7);
    \path [->] (5) edge (8);
    \path [->] (6) edge (9);
    \path [->] (8) edge (9);

    \path [->, Green] (2) edge[bend left] (3);
    \draw [->, Green] (7) edge[bend left] (6);

    \node[draw] at (6.0, 3.0) {$D=8$};
\end{tikzpicture}
}
\caption{First add $e_{23}$, then add $e_{76}$.}
\label{fig:egs_1}
\end{subfigure}%
\begin{subfigure}{.3\textwidth}
\centering
\resizebox{0.9\textwidth}{!}
{%
\begin{tikzpicture}
    [
        ->,>=stealth',shorten >=1pt,auto,node distance=2.8cm,semithick
    ]
    \node (1) [circle, draw, label=below:$v_1$]  at (0.0,1.5) {0};
    \node (2) [circle, draw, label=below:$v_2$]  at (1.5,3.0) {1};
    \node (3) [circle, draw, label=below:$v_3$]  at (1.5,1.5) {2};
    \node (4) [circle, draw, label=below:$v_4$]  at (1.5,0.0) {2};
    \node (5) [circle, draw, label=below:$v_5$]  at (3.0,1.5) {2};
    \node (6) [circle, draw, label=below:$v_6$]  at (4.5,3.0) {1};
    \node (7) [circle, draw, label=below:$v_7$]  at (4.5,1.5) {2};
    \node (8) [circle, draw, label=below:$v_8$]  at (4.5,0.0) {2};
    \node (9) [circle, draw, label=below:$v_9$]  at (6.0,1.5) {0};
    \path [->] (1) edge (2);
    \path [->] (1) edge (3);
    \path [->] (2) edge (5);
    \path [->, Green] (3) edge[bend right] (4);
    \path [->] (4) edge (5);
    \path [->] (5) edge (6);
    \path [->] (5) edge (7);
    \path [->] (5) edge (8);
    \path [->] (6) edge (9);
    \path [->] (7) edge (9);
    \path [->] (8) edge (9);

    \node[draw] at (6.0, 3.0) {$D=8$};
\end{tikzpicture}
}
\caption{\mbox{Add $e_{34}$, then no $e_{ij}\in\matr{M}$ can be added.}}
\label{fig:egs_2}
\end{subfigure}
\caption{Example of different edge generation policies. Given a DAG task $(\mathcal{G},D)$ with $\mathcal{G}$ illustrated in (a) and $D=8$, the edge generation policy used in (b) reduces the DAG width to $2$, while the policy used in (c) cannot reduce the DAG width.}
\label{fig:egs_example}
\end{figure*}

\begin{example}
\label{eg:egs}
We apply the EGS with two different edge generation policies to the DAG task in Fig.~\ref{fig:egs_0} and compare their scheduling results. In the first iteration of the EGS, the two policies select to add edge $e_{43}$ and $e_{35}$, resulting in the DAGs shown in Fig. \ref{fig:egs_1} and Fig. \ref{fig:egs_2}, respectively.
\end{example}

\section{Deep Reinforcement Learning}
\label{sec:drl}

Finding an optimal edge generation policy is challenging. Recall that EGS aims to reduce the DAG width by adding one edge in each step. Since the decision made in one step impacts the following decision process, a greedy policy that always selects the edge with the maximum intermediate width reduction may lead to a local optimum. \secondr{Other simple heuristics also often struggle with the global complexity of NP-hard problems. In this section, we formulate the edge generation as a Markov Decision Process (MDP) that aims to maximize the width reduction. Then, we use the DRL algorithm Proximal Policy Optimization (PPO)~\cite{schulman2017proximal} to find a policy with good global performance for this NP-hard MDP.} 

\subsection{MDP formulation}
An MDP is defined through the tuple $(\mathcal{S}, \mathcal{A}, \mathrm{R}, \mathrm{T}, \gamma)$ with the state space $\mathcal{S}$ and the action space $\mathcal{A}$. A reward function $\mathrm{R}:\mathcal{S}\times\mathcal{A} \mapsto \mathbb{R}$ assigns a scalar reward to state-action pairs. In a deterministic MDP such as used in this work, a transition function $\mathrm{T}:\mathcal{S}\times\mathcal{A}\mapsto\mathcal{S}$ determines the next state according to the current state and current action. A discount factor $\gamma\in [0,1)$ balances the importance of immediate and future rewards. The goal in an MDP is to find a policy $\pi:\mathcal{S}\times\mathcal{A}\mapsto \mathbb{R}^+$ that assigns a probability to each action given a state. The policy aims to maximize the expected cumulative discounted return
\begin{equation}
    \mathcal{R}^\pi(s)=\mathbb{E}_{a\sim \pi(a|s)}\left[\sum_{t=0}^\infty \gamma^t \mathrm{R}(s_{t}, a_{t})~|~s_0 = s\right].
\end{equation}

\revise{In the EGS framework, the state $s_t\in \mathcal{S}$ is a DAG $\mathcal{G}_t = (\mathcal{V}, \mathcal{E}_t)$ with its constant vertices and increasing set of edges. The initial state is the DAG to be scheduled, \ie, $\mathcal{G}_0 = \mathcal{G}$. The action space $\mathcal{A}$ is defined as all eligible edges according to the action mask $\matr{M}_t$, of which the policy can choose an edge $a_t\in \matr{M}_t$ to add to the graph. Thus, the transition function $\mathrm{T}$ results in $\mathcal{E}_{t+1} = \mathcal{E}_t \cup \{a_t\}$, and $\matr{M}_{t+1}$ is updated according to the new edges using \eqref{eq:action_mask_full}. In EGS, the goal is to minimize the number of cores needed to schedule the DAG. Therefore, the reward is defined as
\begin{equation}
    \mathrm{R}(s_t, a_t) = \mathrm{W}(\mathcal{G}_t) - \mathrm{W}(\mathcal{G}_{t+1}),
\end{equation}
giving a reward equal to the width reduction after adding an edge. The MDP terminates if the action mask is empty or the width reaches the lower bound. To aid the agent in solving the MDP we provide it with precomputed node features consisting of (i) the node-level timing attributes, including the WCET, EFT, and LST and (ii) the node-level parallelism attributes, including the LW, IW, and OW.}

\subsection{PPO algorithm}
\label{sec:ppo}

The PPO algorithm \cite{schulman2017proximal} is an on-policy, actor-critic reinforcement learning algorithm that trains a value function $V^\pi_\phi(s)$ (known as critic network) that predicts the cumulative return $\mathcal{R}(s)$ of a state $s$ under the currently active policy $\pi_\theta(s)$ (known as actor-network). Conducting a rollout (\ie, a long sequence of state-action-reward tuples), an advantage of the actions in the rollout is approximated using a generalized advantage estimate (GAE)\cite{schulman2015high}. The advantage indicates whether the action taken was better or worse than the average performance of the current policy. If the action was better (\textit{resp.}, worse) than the expectation, the probability of taking this action is increased (\textit{resp.}, decreased).
We refer the reader to \cite{schulman2017proximal} for the details of the PPO algorithm.

\subsection{Neural network architecture}

\begin{figure*}[t]
\centering
\includegraphics[width=\linewidth]{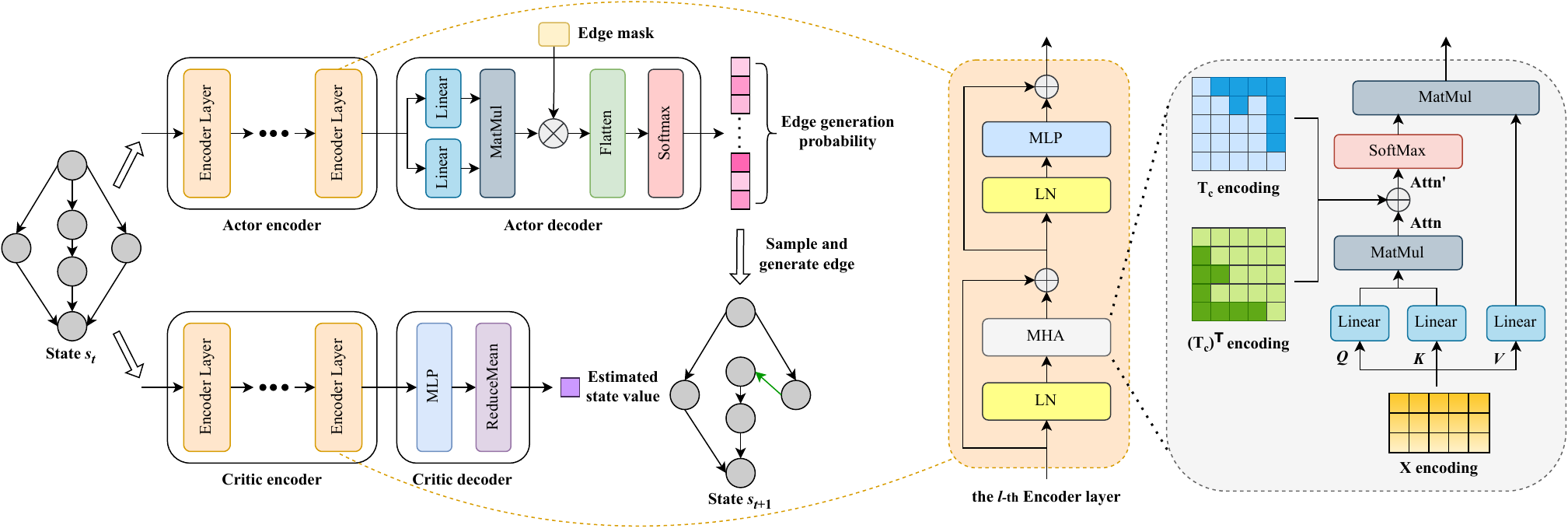}
\caption{Illustration of the actor and critic network.}
\label{fig:nn}
\end{figure*}

We apply an encoder-decoder architecture in the actor and critic network of the PPO algorithm. The encoder learns the node embedding of an input graph, and the decoder uses the node embedding to produce an output depending on its downstream task. Recall that in the PPO algorithm, the actor and critic have the same input but different outputs. Hence, we use the same encoder but different decoder architectures for the two networks. The overall architecture of the actor and critic networks is illustrated in \revise{Fig.}~\ref{fig:nn}. 

The encoder is built upon a recent graph representation network \textit{Graphormer} \cite{ying2021transformers}, which achieves state-of-the-art performance in various graph applications. Graphormer consists of multiple encoder layers, each of which includes a multi-head attention (MHA) module and a multi-layer perceptron (MLP) block with Layer normalization (LN) applied before the MHA and the MLP. The MHA is the key component in Graphormer, which effectively encodes the structural graph information via a residual term in the attention module. 
In this work, since a directed graph is considered, we encode both the forward and backward connectivity using the adjacency matrices of transitive closure $\matr{T}_c$ and its transpose $(\matr{T}_c)^\intercal$, respectively. Concretely, for each attention head, we assign different trainable scalars to each feasible value (\ie, $0$ and $1$) in $\matr{T}_c$ and $(\matr{T}_c)^\intercal$. Then, the trainable scalars corresponding to the $(i,j)$-th node pair will be added to the $(i,j)$-th entry of the attention product matrix $\matr{Attn} \in \mathbb{R}^{n\times n}$:
\begin{equation}\label{eq:attn}
    [\matr{Attn'}]_{ij} = [\matr{Attn}]_{ij} + b_1([\matr{T}_c]_{ij}) + b_2([\matr{T}_c^\intercal]_{ij}), \forall i,j
\end{equation}
where $b_1([\matr{T}_c]_{ij})$ and $b_2([\matr{T}_c^\intercal]_{ij})$\ denote the trainable scalars corresponding to $[\matr{T}_c]_{ij}$ and $[\matr{T}_c^\intercal]_{ij}$, respectively. 
Note that the trainable scalars are different in different attention heads, and shared across all encoder layers. For simplicity of presentation, we illustrate the single-head attention in \revise{Fig.}~\ref{fig:nn} and Equation~\eqref{eq:attn}. The extension to the multi-head attention is standard and straightforward.
\revise{We note that there are many other design choices of graph representation networks, \eg, graph neural networks (GNNs). A comprehensive survey can be found in \cite{wu2020comprehensive}. Evaluating different design choices of graph representation network architectures is beyond the scope of this paper.}

The decoder of the actor-network transforms the node embedding learned from the encoder network into edge generation probabilities. It first applies two linear layers to generate two linear transformations of each node embedding. Next, it conducts an inner product between the linear transformations of every two nodes to derive a scalar representing the edge generation score between the two nodes. Written in the matrix form, we have:
\begin{equation}
    \matr{S} = (\matr{H} \matr{W_1}) (\matr{H} \matr{W_2})^\intercal
\end{equation}
where $\matr{W_1} \in \mathbb{R}^{d \times d}$ and $\matr{W_2} \in \mathbb{R}^{d \times d}$ denote the linear transformation matrices; $\matr{H} \in \mathbb{R}^{n\times d}$ denotes the node embedding learned from the encoder network; $\matr{S} \in \mathbb{R}^{n \times n}$ denotes the edge generation score matrix where $[\matr{S}]_{ij}$ is the edge generation score from node $v_i$ to $v_j$.
Then, the action mask is used to mask out the ineligible edges by setting their generation score to $0$. Finally, a $\mathrm{Flatten}$ layer transforms the edge generation score matrix into a long vector, and a $\mathrm{Softmax}$ layer is applied to convert the edge generation scores into probabilities.

The decoder of the critic network is given by a MLP shared across different nodes. The input dimension of the MLP is the same as the dimension of node embedding, and the output dimension equals one. The MLP downscales each node's embedding into a scalar. The mean of scalars corresponding to all nodes is then used to estimate the cumulative return $\mathcal{R}(s)$ of the current input state $s$.

\section{Evaluation}
\label{sec:evaluation}

\subsection{DAG task generation}

We follow the DAG generation method proposed in \cite{melani2015response} to generate random DAG tasks for evaluation. The C++ implementation of the DAG generation method is available online\footnote{\url{https://github.com/mive93/DAG-scheduling}}.
To evaluate the proposed algorithm on DAG tasks with various characteristics, we generate a DAG task set with two varied parameters: (i) task utilization defined as the sum of sub-task utilization (\textit{i.e.}, $U = \sum_{i=1}^n{U_i}$, where $U_i = C_i / D$); (ii) task density defined as the ratio of the DAG length and its deadline (\textit{i.e.}, $dens = \mathrm{L}(\mathcal{G}) / D$). For the task utilization, we consider seven ranges $U \in [\underline{U}, \underline{U}+1)$ with $\underline{U}$ varied from $\{1.0, 2.0, ..., 7.0\}$. For the task density, we consider five ranges $dens \in [\underline{dens}, \underline{dens}+0.1)$ with $\underline{dens}$ varied from $\{0.5,0.6,...,0.9\}$. 
For each combination of the above parameter ranges, we generate $3,000$ random DAG tasks, which constitute a set of $7 \times 5 \times 3,000 = 105,000$ tasks. Then, we randomly split the whole task set into train, validation, and test sets using a ratio of $0.6:0.2:0.2$ \revise{, applying the split equally for each parameter range. We use the splits} respectively for training the DRL agent, tuning the DRL hyperparameters and the final evaluation of the pre-trained DRL agent and other comparison algorithms.

We note that the task generation method does not support specifying the number of nodes as a parameter of DAG generation. To show the variety of $n$ in our generated task set, we plot a histogram of $n$ in Fig.~\ref{fig:n_hist}. From the figure, we can see that $n\in[0, 140]$, with the majority of cases falling between $10$ and $80$.

\begin{figure}[ht]
\centering
\includegraphics[width=0.85\linewidth]{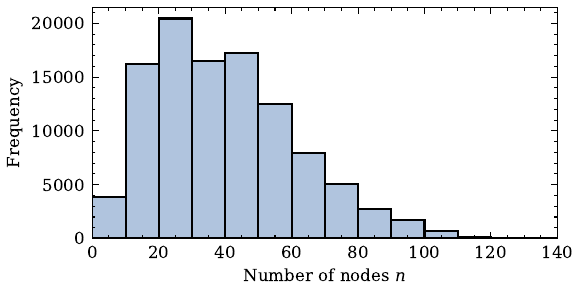}
\caption{Histogram of number of nodes $n$.}
\label{fig:n_hist}
\end{figure}

\begin{table*} [t]
\center
\caption{\revise{Comparison results on processor usage.}}\label{tab:res}
\newcommand{\colwidth}{1.5cm}
\begin{tabular}{ll|p{\colwidth}<{\raggedleft}p{\colwidth}<{\raggedleft}p{\colwidth}<{\raggedleft}p{\colwidth}<{\raggedleft}p{\colwidth}<{\raggedleft}p{\colwidth}<{\raggedleft}}
\hline
Utilization  & Density  & \texttt{EGS-PPO}  & \texttt{EGS-GRD}  &  \texttt{EGS-RND}  & \texttt{He2019}  & \texttt{Zhao2020}  & \texttt{He2021}  \\ \hline
& $[0.5, 0.6)$ & $\mathbf{2.06^{0.52}}$  &$2.22^{0.61}$& $2.26^{0.63}$  & $\mathbf{2.06^{0.51}}$  & $2.09^{0.53}$  &$\mathbf{2.06^{0.51}}$\\
             & $[0.6, 0.7)$ & $\mathbf{2.10^{0.54}}$  &$2.19^{0.59}$& $2.23^{0.62}$  & $2.14^{0.57}$  & $2.15^{0.57}$  &$2.12^{0.55}$\\
 $[1.0, 2.0)$ & $[0.7, 0.8)$ & $\mathbf{2.14^{0.56}}$  &$2.20^{0.60}$& $2.28^{0.64}$  & $2.20^{0.60}$  & $2.23^{0.61}$  &$2.17^{0.59}$\\
             & $[0.8, 0.9)$ & $\mathbf{2.20^{0.60}}$  &$2.29^{0.66}$& $2.31^{0.66}$  & $2.29^{0.66}$  & $2.33^{0.71}$  &$2.27^{0.65}$\\
             & $[0.9, 1.0)$ & $\mathbf{2.39^{0.75}}$  &$2.46^{0.81}$& $2.48^{0.79}$  & $2.47^{0.81}$  & $2.55^{0.91}$  &$2.47^{0.80}$\\\hline
 & $[0.5, 0.6)$ & $\mathbf{3.32^{0.47}}$  &$3.73^{0.57}$& $3.68^{0.56}$  & $3.38^{0.49}$  & $3.41^{0.49}$  &$3.36^{0.48}$\\
             & $[0.6, 0.7)$ & $\mathbf{3.38^{0.49}}$  &$3.74^{0.59}$& $3.78^{0.57}$  & $3.50^{0.51}$  & $3.56^{0.53}$  &$3.50^{0.50}$\\
$[2.0, 3.0)$             & $[0.7, 0.8)$ & $\mathbf{3.49^{0.51}}$  &$3.79^{0.62}$& $3.84^{0.65}$  & $3.65^{0.56}$  & $3.70^{0.59}$  &$3.60^{0.54}$\\
             & $[0.8, 0.9)$ & $\mathbf{3.67^{0.58}}$  &$3.99^{0.66}$& $4.05^{0.69}$  & $3.92^{0.67}$  & $4.09^{0.76}$  &$3.90^{0.63}$\\
             & $[0.9, 1.0)$ & $\mathbf{3.99^{0.74}}$  &$4.33^{0.83}$& $4.35^{0.84}$  & $4.32^{0.93}$  & $4.58^{1.03}$  &$4.28^{0.86}$\\\hline
 & $[0.5, 0.6)$ & $\mathbf{4.17^{0.38}}$  &$4.82^{0.57}$& $4.74^{0.52}$  & $4.39^{0.49}$  & $4.46^{0.50}$  &$4.37^{0.49}$\\
             & $[0.6, 0.7)$ & $\mathbf{4.44^{0.51}}$  &$4.97^{0.57}$& $4.90^{0.56}$  & $4.63^{0.54}$  & $4.72^{0.55}$  &$4.63^{0.50}$\\
$[3.0, 4.0)$             & $[0.7, 0.8)$ & $\mathbf{4.60^{0.56}}$  &$5.07^{0.62}$& $5.10^{0.63}$  & $4.90^{0.56}$  & $5.01^{0.64}$  &$4.85^{0.59}$\\
             & $[0.8, 0.9)$ & $\mathbf{4.91^{0.67}}$  &$5.47^{0.75}$& $5.51^{0.79}$  & $5.33^{0.82}$  & $5.59^{0.92}$  &$5.24^{0.72}$\\
             & $[0.9, 1.0)$ & $\mathbf{5.39^{0.84}}$  &$5.95^{0.97}$& $6.06^{0.98}$  & $5.95^{1.16}$  & $6.44^{1.43}$  &$5.87^{1.09}$\\\hline
 & $[0.5, 0.6)$ & $\mathbf{5.48^{0.51}}$  &$6.26^{0.68}$& $6.07^{0.61}$  & $5.59^{0.51}$  & $5.67^{0.56}$  &$5.60^{0.51}$\\
             & $[0.6, 0.7)$ & $\mathbf{5.63^{0.55}}$  &$6.49^{0.74}$& $6.35^{0.66}$  & $5.90^{0.60}$  & $5.97^{0.57}$  &$5.86^{0.56}$\\
$[4.0, 5.0)$             & $[0.7, 0.8)$ & $\mathbf{5.89^{0.60}}$  &$6.71^{0.85}$& $6.60^{0.79}$  & $6.24^{0.66}$  & $6.44^{0.81}$  &$6.20^{0.68}$\\
             & $[0.8, 0.9)$ & $\mathbf{6.30^{0.73}}$  &$7.12^{0.98}$& $7.19^{0.96}$  & $6.80^{0.92}$  & $7.20^{1.15}$  &$6.71^{0.85}$\\
             & $[0.9, 1.0)$ & $\mathbf{6.87^{0.99}}$  &$7.71^{1.13}$& $7.92^{1.25}$  & $7.77^{1.63}$  & $8.55^{1.99}$  &$7.47^{1.26}$\\\hline
 & $[0.5, 0.6)$ & $\mathbf{6.40^{0.49}}$  &$7.51^{0.70}$& $7.22^{0.62}$  & $6.67^{0.53}$  & $6.80^{0.54}$  &$6.63^{0.51}$\\
             & $[0.6, 0.7)$ & $\mathbf{6.68^{0.53}}$  &$7.82^{0.77}$& $7.56^{0.77}$  & $7.05^{0.64}$  & $7.19^{0.69}$  &$7.04^{0.62}$\\
$[5.0, 6.0)$             & $[0.7, 0.8)$ & $\mathbf{7.01^{0.59}}$  &$8.05^{0.79}$& $7.95^{0.75}$  & $7.53^{0.69}$  & $7.72^{0.79}$  &$7.39^{0.65}$\\
             & $[0.8, 0.9)$ & $\mathbf{7.47^{0.72}}$  &$8.61^{0.98}$& $8.60^{1.07}$  & $8.13^{1.03}$  & $8.54^{1.19}$  &$8.00^{0.88}$\\
             & $[0.9, 1.0)$ & $\mathbf{8.41^{1.05}}$  &$9.62^{1.28}$& $9.66^{1.36}$  & $9.71^{1.95}$  & $10.73^{2.47}$ &$9.14^{1.38}$\\\hline
 & $[0.5, 0.6)$ & $\mathbf{7.66^{0.54}}$  &$9.01^{0.82}$& $8.63^{0.74}$  & $7.88^{0.57}$  & $8.01^{0.60}$  &$7.85^{0.59}$\\
             & $[0.6, 0.7)$ & $\mathbf{8.03^{0.59}}$  &$9.37^{0.87}$& $9.05^{0.83}$  & $8.36^{0.72}$  & $8.55^{0.75}$  &$8.32^{0.69}$\\
$[6.0, 7.0)$             & $[0.7, 0.8)$ & $\mathbf{8.48^{0.80}}$  &$9.77^{1.05}$& $9.73^{1.05}$  & $9.05^{0.88}$  & $9.25^{0.98}$  &$8.91^{0.83}$\\
             & $[0.8, 0.9)$ & $\mathbf{9.03^{0.89}}$  &$10.41^{1.11}$& $10.45^{1.23}$  & $9.81^{1.11}$  & $10.19^{1.39}$ &$9.60^{0.96}$\\
             & $[0.9, 1.0)$ & $\mathbf{10.10^{1.42}}$ &$11.71^{1.74}$& $11.89^{1.71}$  & $11.83^{2.62}$ & $13.14^{3.34}$ &$11.20^{2.00}$\\\hline
 & $[0.5, 0.6)$ & $\mathbf{9.08^{0.65}}$  &$10.91^{1.06}$& $10.26^{0.86}$  & $9.28^{0.65}$  & $9.42^{0.68}$  &$9.26^{0.65}$\\
             & $[0.6, 0.7)$ & $\mathbf{9.42^{0.76}}$  &$11.21^{1.17}$& $10.73^{0.96}$  & $9.88^{0.81}$  & $10.03^{0.84}$ &$9.79^{0.80}$\\
$[7.0, 8.0)$             & $[0.7, 0.8)$ & $\mathbf{10.02^{0.90}}$ &$11.72^{1.26}$& $11.59^{1.15}$  & $10.69^{1.05}$ & $10.84^{1.05}$ &$10.55^{1.00}$\\
             & $[0.8, 0.9)$ & $\mathbf{10.78^{1.06}}$ &$12.60^{1.43}$& $12.73^{1.44}$  & $11.71^{1.39}$ & $12.28^{1.79}$ &$11.46^{1.21}$\\
             & $[0.9, 1.0)$ & $\mathbf{12.00^{1.80}}$ &$14.13^{2.20}$& $14.37^{2.35}$  & $14.32^{3.49}$ & $15.76^{4.22}$ &$13.10^{2.18}$\\\hline
\end{tabular}

\end{table*}

\subsection{Comparison algorithms}
We evaluate the proposed EGS framework and DRL algorithm by comparing the performance of the following DAG scheduling algorithms on the generated task set. 

\textbf{Edge generation scheduling heuristics}. 
\revise{
We consider three heuristics based on the proposed EGS framework, integrated with different edge generation policies, \ie, PPO policy (\texttt{EGS-PPO}), greedy policy (\texttt{EGS-GRD}), and random policy (\texttt{EGS-RND}). 
In each iteration of EGS,
\texttt{EGS-PPO} generates the edge with the maximum confidence given by a pre-trained PPO agent. 
\texttt{EGS-GRD} selects an edge that leads to the maximum intermediate width reduction, with a tie-breaking strategy of selecting the one with the minimum intermediate length increase.
\texttt{EGS-RND} generates an edge uniformly at random.
We note that all three policies only generate edges that are deemed eligible by the edge masks, as defined in the EGS framework.
The effectiveness of the proposed PPO policy can be evaluated through the comparison with \texttt{EGS-GRD} and \texttt{EGS-RND}.
}

\textbf{Mixed-integer linear programming} (\texttt{MILP}). We formulate the DAG scheduling problem as a mixed-integer linear program that can be solved by standard mathematical programming solvers to obtain optimal solutions. Although the optimality can be guaranteed, \texttt{MILP} is not computationally tractable. Thus, it can only be used to solve relatively small instances within a reasonable time (in our experiments, DAGs with $n \leq 20$ are solved by \texttt{MILP} within a 2-hour time limit). Through the comparison with \texttt{MILP}, the optimality gap of each comparison algorithm can be acquired. The detailed formulation of the \texttt{MILP} is reported in Appendix~\ref{sec:milp}.

\textbf{State-of-the-art DAG scheduling heuristics}. The proposed EGS framework is compared with three state-of-the-art DAG scheduling algorithms: \texttt{He2019} \cite{he2019intra}, \texttt{Zhao2020} \cite{zhao2020dag}\revise{, and \texttt{He2021} \cite{he2021response}}. These algorithms are all developed based on \textit{list scheduling} framework. The main differences are the priorities assigned to the DAG nodes. 
\revise{For example, \texttt{He2021} uses the vertex lengths (\ie, $t_i^{\text{EFT}} + D - t_i^{\text{LFT}}$) as node priorities},
while \texttt{He2019} and \texttt{Zhao2020} developed more sophisticated priority assignment rules.
We note that the objective used in these list scheduling heuristics is to minimize the \textit{makespan} of a DAG task given a fixed number of processors. However, they can be easily adapted to minimize processor usage given task deadline through an incremental search as shown in \revise{Appendix~\ref{sec:list_sched}}.

\subsection{Experimental setup}

All the experiments are conducted on a workstation equipped with AMD EPYC 7763 CPUs and Nvidia A100 GPUs running GNU/Linux. The proposed EGS framework is implemented in Python 3.8.12, and the PPO algorithm is implemented using Tensorflow 2.7.0. The \texttt{MILP} is solved by a standard mathematical programming solver Gurobi 9.5.0\footnote{\url{https://www.gurobi.com}} with a Python interface.
The hyper-parameters used in our PPO implementation are reported in \revise{Appendix~\ref{sec:ppo_hyper}}.

\revise{
The training of the DRL agent takes approximately $30$ hours. The trained actor network is used to make edge generation decisions on unseen tasks in the test set, taking on average $3.53$ seconds to schedule one DAG task.
}

\subsection{Comparison results}

\subsubsection{Overall results}

Table~\ref{tab:res} compares the proposed EGS algorithms with the state-of-the-art DAG scheduling heuristics. It summarizes each algorithm's average processor usage and associated standard deviation (indicated in the superscript) under different task utilization and density. We use \textbf{boldface} type to indicate the best results within the comparison. The results show that \texttt{EGS-PPO} algorithm outperforms other list scheduling heuristics in terms of processor usage across all tested utilization and density levels, which demonstrates the effectiveness of the proposed EGS framework and PPO policy. Additionally, the performance gain of \texttt{EGS-PPO} improves as the task utilization and density increase. 
\revise{\texttt{EGS-GRD} and \texttt{EGS-RND} show a similar trend. In particular, they perform worse than other algorithms at lower densities but perform better than \texttt{Zhao2020} at higher densities. This indicates that EGS framework has more potential to schedule DAGs with higher densities than list scheduling methods.

By comparing \texttt{EGS-GRD} and \texttt{EGS-RND}, it shows that the greedy policy performs better than the random when utilization is low, or density is high. This is expected since the greedy policy tends to lead the decision process to local optima, thus it is more difficult to achieve globally good performance for a long decision episode.
}

To further understand how the algorithms perform compared to the optimal solution, we compute their optimality gaps, which are defined as the relative performance deviations between the test algorithms and the optimal solution, \ie, $\frac{M_{alg} - M_{opt}}{M_{opt}}$,
where $M_{alg}$ and $M_{opt}$ denote the number of processors used by the test algorithm and the optimal solution, respectively. In our experiments, the optimal solutions are obtained by solving \texttt{MILP}. Since the \texttt{MILP} solver is not computationally tractable, we run it on DAGs with $n \leq 20$ and present the average optimality gap of each comparison algorithm in Table~\ref{tab:opt}. It shows that \texttt{EGS-PPO} achieves the best optimality gap (smaller than $2\%$) among all comparison algorithms. 
\revise{Moreover, \texttt{EGS-GRD} and \texttt{EGS-RND} achieve similar optimality gaps to \texttt{Zhao2020}.}

\begin{table} [t]
\center
\caption{\revise{Average optimality gap $\frac{M_{alg} - M_{opt}}{M_{opt}}$ when $n \leq 20$.}}
\label{tab:opt}

\begin{tabular}{p{1cm}p{1cm}p{1cm}p{1cm}p{1.2cm}p{1cm}}

\hline
\texttt{EGS-PPO} & \texttt{EGS-GRD} & \texttt{EGS-RND}  &   \texttt{He2019} & \texttt{Zhao2020} & \texttt{He2021}   \\
\hline
\textbf{1.78\%}   &7.20\%  & 8.92\%   & 5.95\%   & 8.86\%   & 5.29\%     \\ 
\hline

\end{tabular}

\end{table}

\subsubsection{Sensitivity of task utilization}

Fig.~\ref{fig:res_util} illustrates the average number of processors required by each algorithm under different task utilization $U$. The figure shows similar findings as in Table~\ref{tab:res} that \texttt{EGS-PPO} outperforms other algorithms for all task utilization levels, and the performance gain increases with task utilization. 
\revise{With regards to the list scheduling heuristics, \texttt{He2021} achieves the best performance, demonstrating that its priority assignment is better than \texttt{He2019} and \texttt{Zhao2020}.}

\revise{
Additionally, we compare the acceptance ratio (\ie, $\frac{\text{\# schedulable tasks}}{\text{\# tasks in the test set}}$, for each utilization level with increments of $0.1$) achieved by each algorithm given $M=8$. We can see from Fig.~\ref{fig:acc_util} that \texttt{EGS-PPO} achieves the best acceptance ratio among all algorithms. While \texttt{EGS-GRD} and \texttt{EGS-RND} achieve high acceptance ratio for low utilization tasks (\ie, better than \texttt{He2019} and \texttt{Zhao2020} for $U < 4$ and $U < 5$, respectively), their performance drops rapidly as $U > 6$. 
This is because the advantage of \texttt{EGS-GRD} and \texttt{EGS-RND} lies in scheduling high-density tasks (see Table~\ref{tab:res}). As $U \to M$, most high-density tasks require more than $M$ processors for all algorithms (see Table~\ref{tab:res}). Thus, the advantage of achieving lower processor usage for high-density tasks cannot contribute to the acceptance ratio. More details about the impact of task density on processor usage are discussed in the following.
}

\begin{figure}[t]
\centering
\includegraphics[width=0.85\linewidth]{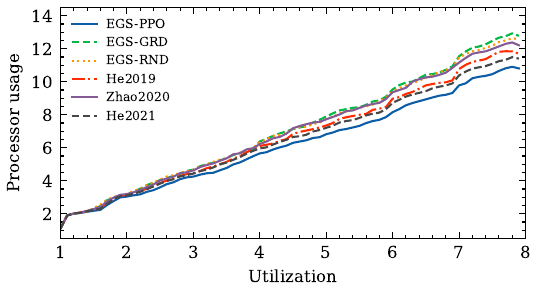}
\caption{\revise{Average processor usage with various $U$.}}
\label{fig:res_util}
\end{figure}


\begin{figure}[t]
\centering
\includegraphics[width=0.85\linewidth]{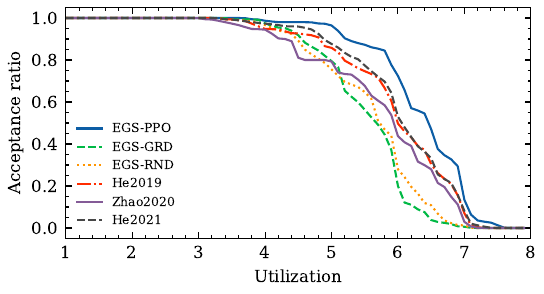}
\caption{\revise{Acceptance ratio with $M=8$ and various $U$.}}
\label{fig:acc_util}
\end{figure}

\subsubsection{Sensitivity of task density}

Fig.~\ref{fig:res_dens} illustrates the processor usage with different task densities with a violin plot. The ticks of each violin show the maximum, average, and minimum (from top to bottom) processor usage among all the test instances within each density range. The figure shows that the overall processor usage increases with task density, and \texttt{EGS-PPO} outperforms other algorithms in terms of average processor usage and performance stability, as indicated by the violin size. In particular, when $dens \geq 0.9$, \texttt{EGS-PPO} can save up to $5$ and $8$ processors compared to \texttt{He2019} and \texttt{Zhao2020}, respectively. Moreover, it shows similar findings as in Table~\ref{tab:res} that \revise{\texttt{EGS-GRD} and \texttt{EGS-RND} perform better than \texttt{He2019} and \texttt{Zhao2020} for high task density ($dens \geq 0.9$).}

\begin{figure}[ht]
\centering
\includegraphics[width=0.85\linewidth]{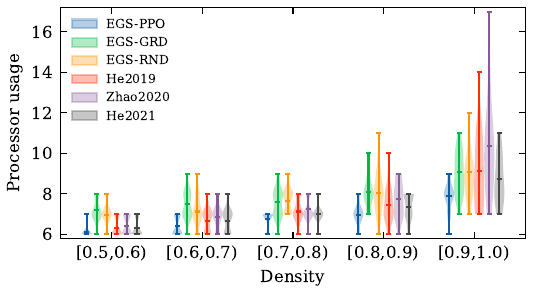}
\caption{\revise{Processor usage with $U=5$ and various $dens$.}}
\label{fig:res_dens}
\end{figure}

\section{Conclusion and Future Work}
\label{sec:conclusion}

In this paper, we studied the problem of scheduling real-time DAG tasks on multicore platforms to minimize processor usage while guaranteeing schedulability. We presented a new schedulability test based on the observation that a DAG task is schedulable if its width is not greater than the number of available processors and its length is less than or equal to the deadline. A new Edge Generation Scheduling (EGS) framework is proposed that converts a DAG task into a trivially schedulable DAG by iteratively adding edges. A DRL algorithm combined with a graph representation neural network is developed to learn an efficient edge generation policy for EGS. The effectiveness of three EGS variants (\ie, \texttt{EGS-PPO}, \texttt{EGS-GRD}, and \texttt{EGS-RND}) was evaluated by comparing to exact solutions and state-of-the-art DAG scheduling algorithms. Experimental results show that \texttt{EGS-PPO} outperforms other approaches, while \texttt{EGS-GRD} and \texttt{EGS-RND} achieve comparable results to the state-of-the-art for low-utilization and high-density tasks. 

Although the main focus of the paper is on the schedulability aspects of a DAG task, we note that the approach can also be extended to solve joint optimization of schedulability and other timing attributes such as reaction time or data age.
\revise{Additionally, given that the proposed method requires fewer processors than the state-of-the-art, it is expected to also provide better results when scheduling multiple tasks under a federated scheduling policy (\eg, \cite{li2014federated}).}

In the future, we plan to study real-time DAG scheduling in heterogeneous platforms. We are also interested in extending the proposed method to minimize the makespan of a DAG, which is important for production scheduling and cloud computing applications.

\bibliographystyle{IEEEtran}
\bibliography{IEEEabrv,drl_dag}

\appendices

\section{MILP Formulation}
\label{sec:milp}

The DAG scheduling problem is formulated as a mixed-integer linear program (MILP) with the notations in Table~\ref{tab:notation}.

\begin{table} [h]
\center
\begin{threeparttable}
\caption{Table of notations used in the MILP.}\label{tab:notation}

\setlength{\tabcolsep}{\mytabcolsepsmall}
\begin{tabular}{ll}
\hline
Notation & Implication \\

\hline

Problem data &  \\
$\mathcal{G}=(\mathcal{V},\mathcal{E})$ & Task graph with $n$ nodes (sub-tasks); \\

$D$ & Task deadline; \\

$m$ & Width of the task graph (\textit{i.e.}, $\mathrm{W}(\mathcal{G}) = m$); \\

$i,j$ & Node index, $i=1,...,n$, $j=i,...,n$; \\

$k$ & Processor index, $k=1,...,m$; \\

$C_i$ & WCET of node $i$; \\


$M_1, M_2$ & Two large constant numbers. \\

\hline

Decisions & \\
$x_{ik}$ & $1$, if node $v_i$ executes on processor $k$; $0$, otherwise; \\

$y_{k}$ & $1$, if any node executes on processor $k$; 0, otherwise;\\

$\gamma_{ij}$ & $1$, if node $v_i$ and $j$ execute on the same processor,\\
& and node $v_i$ executes later than $v_j$; $0$, otherwise; \\

$f_i$ & Finishing time of node $v_i$. \\

\hline

\end{tabular}

\end{threeparttable}

\end{table}
\begin{equation}\label{eq:obj}
    \minimize \quad \sum_{k=1}^{m}{y_k}
\end{equation}
subject to:
\begin{equation}\label{eq:sole_proc}
    \sum_{k=1}^{m}{x_{ik}} = 1, \quad \forall v_i \in \mathcal{V}
\end{equation}
\begin{multline}\label{eq:exe_order_1}
    f_{i} \leq f_{j} - C_{j} + M_1 \cdot \gamma_{ij}\\
    + M_2 \cdot (2 - x_{ik} - x_{jk}), \forall v_i \neq v_j, k=1,...,m
\end{multline}
\begin{multline}\label{eq:exe_order_2}
    f_{j} \leq f_{i} - C_{i} + M_1 \cdot (1 - \gamma_{ij})\\ + M_2 \cdot (2 - x_{ik} - x_{jk}), \forall v_i \neq v_j, k=1,...,m
\end{multline}
\begin{equation}\label{eq:start_time}
    C_i \leq f_i, \quad \forall v_i \in \mathcal{V}
\end{equation}
\begin{equation}\label{eq:finish_time}
    f_i \leq D, \quad \forall v_i \in \mathcal{V}
\end{equation}
\begin{equation}\label{eq:precedence_constraint}
    f_{i} + C_{j} \leq f_{j} , \quad e_{ij} \in \mathcal{E}
\end{equation}
\begin{equation}\label{eq:busy_proc}
    x_{ik} \leq y_k, \quad \forall v_i \in \mathcal{V}, k = 1,...,m
\end{equation}
Objective~\eqref{eq:obj} minimizes the number of processors used to schedule the DAG task. 
Constraints~\eqref{eq:sole_proc} ensure that each node is assigned to one and only one processor to execute. 
Constraints~\eqref{eq:exe_order_1} and \eqref{eq:exe_order_2} guarantee the execution order of the nodes assigned to the same processor and make sure there is at most one node running on each processor at each time instant. 
Constraints~\eqref{eq:start_time} and \eqref{eq:finish_time} ensure all the nodes start and finish their execution no earlier than the release time $0$ and no later than the deadline $D$, respectively. Constraints~\eqref{eq:precedence_constraint} implement the precedence constraints between the nodes. Constraints~\eqref{eq:busy_proc} indicate busy processors (\textit{i.e.}, the processors to which at least one node is assigned).

\revise{
\section{Incremental search for List Scheduling}
\label{sec:list_sched}

Algorithm~\ref{alg:list_sched} presents the incremental search procedures of minimizing processor usage with list scheduling heuristics.

{
\SetAlCapFnt{\small}
\SetAlCapNameFnt{\small}
\SetAlFnt{\small}
\begin{algorithm} [h]
    \caption{Incremental search to minimize processor usage with list scheduling heuristics}
    \label{alg:list_sched}

    \KwIn{($\mathcal{G},D)$: the DAG task to be scheduled\;}
    \KwOut{$M^*$: number of processors used\;}
    \For{$M^* \leftarrow 1$ \KwTo $n$}
    {
        $Makespan \leftarrow \texttt{ListSched}(\mathcal{G}, M^*)$ \cite{he2019intra, zhao2020dag, verucchi2020latency}\;
        \If{$\mathcal{G}$ is schedulable (\textit{i.e.}, $Makespan \leq D$)}
        {
            \Return $M^*$\;
        }
    }
\end{algorithm}
}

\vspace{-1em}

\section{PPO Hyperparameters}
\label{sec:ppo_hyper}
We summarized the hyperparameters of the PPO implementation in Table~\ref{tab:hyperparameter}.

\begin{table} [ht]
\center
\caption{PPO hyper-parameters.}\label{tab:hyperparameter}

\setlength{\tabcolsep}{\mytabcolsepsmall}
\begin{tabular}{ll|ll}
\hline
Hyper-parameter & Value & Hyper-parameter & Value \\
\hline

Discount factor $\gamma$ & $0.99$ & Number of encoder layers & $2$\\

GAE parameter $\lambda$ & $0.97$ & Number of attention heads & $8$ \\

Clipping parameter $\epsilon$ & $0.2$ & Node embedding dimension & $64$ \\

Number of iterations & $500$ & MLP hidden dimension & $64$ \\

Length of rollout & $50,000$ &  Initial learning rate & $10^{-4}$\\


Batch size & $100$ & End learning rate & $10^{-5}$ \\

Epochs per iteration & $10$ & Learning rate decay steps & $10^6$ \\

\hline

\end{tabular}


\end{table}
}

\end{document}